\documentclass[10pt]{article}
\usepackage{graphicx} %

\usepackage{hyperref}
\usepackage{url}
\usepackage{enumitem}
\usepackage{mathtools}
\usepackage{placeins}
\usepackage{bbm}
\usepackage{xcolor}
\usepackage{amsmath}
\usepackage{amsthm}
\usepackage{amssymb}
\usepackage{graphicx}
\usepackage{floatrow}
\usepackage{cancel}
\usepackage{array}
\usepackage{comment}
\usepackage{multirow}
\usepackage{float}
\usepackage{makecell}
\usepackage{stmaryrd}

\usepackage{listings}

\definecolor{codegreen}{rgb}{0,0.6,0}
\definecolor{codegray}{rgb}{0.5,0.5,0.5}
\definecolor{codepurple}{rgb}{0.58,0,0.82}
\definecolor{backcolour}{rgb}{0.95,0.95,0.92}

\lstdefinestyle{mystyle}{
    backgroundcolor=\color{backcolour},   
    commentstyle=\color{codegreen},
    keywordstyle=\color{magenta},
    numberstyle=\tiny\color{codegray},
    stringstyle=\color{codepurple},
    basicstyle=\ttfamily\footnotesize,
    breakatwhitespace=false,         
    breaklines=true,                 
    captionpos=b,                    
    keepspaces=true,                 
    numbers=left,                    
    numbersep=5pt,                  
    showspaces=false,                
    showstringspaces=false,
    showtabs=false,                  
    tabsize=2
}

\newtheorem{definition}{Definition}

\newtheorem{lemma}{Lemma}

\newfloat{lstfloat}{htbp}{lop}
\floatname{lstfloat}{Listing}

\DeclarePairedDelimiterX\braket[2]{\langle}{\rangle}{#1\,\delimsize\vert\,\mathopen{}#2}

\newcommand{\dbb}[1]{\left[\mkern-3mu\left[ #1 \right]\mkern-3mu\right]}

\newcommand{\img}[1]{
\operatorname{img}{#1}
}
\newcommand{\preimg}[1]{
\operatorname{preimg}{#1}
}
\newcommand{\dom}[1]{
\operatorname{dom}{#1}
}
\newcommand{\cod}[1]{
\operatorname{cod}{#1}
}
\newcommand{\Id}[1]{
\operatorname{Id}{#1}
}
\newcommand{\card}[1]{
|{#1}|
}
\newcommand{\El}[1]{
\operatorname{El}({#1})
}
\newcommand{\cardEl}[1]{
\card{\El{{#1}}}
}
\newcommand{\rng}[2]{
[#1]_{#2}
}
\newcommand{\Sp}[2]{
\operatorname{Sp}({#1},{#2})
}
\newcommand{\Jn}[2]{
\operatorname{Jn}({#1},{#2})
}
\newcommand{\lift}[2]{
[{#1}; {#2}]
}
\newcommand{\length}[1]{
\operatorname{len}{#1}
}

\newcommand{\Ob}[1]{
\operatorname{Ob}({#1})
}

\newif\ifanon
\anonfalse

\newcommand{\bracketpyncd}{
\ifanon
\else{ (pyncd)}
\fi
}

\newcommand{\brackettsncd}{
\ifanon
\else{ (tsncd)}
\fi
}

\newcommand{\tablefig}[1]{
\raisebox{-\totalheight}{\includegraphics[scale=0.15]{#1}}
}

\newcommand{\DiagramMacroC}[3]{
\begin{figure}[h!]
    \centering
    \tablefig{#1/#2}
    \caption{#3}
    \label{fig:#2}
\end{figure}
}

\newcommand{\DiagramMacroLarge}[3]{
\begin{figure}[h!]
    \centering
    \includegraphics[scale=0.4]{#1/#2}
    \caption{#3}
    \label{fig:#2}
\end{figure}
}

\newcommand{\DiagramMacroMedium}[3]{
\begin{figure}[h!]
    \centering
    \includegraphics[scale=0.25]{#1/#2}
    \caption{#3}
    \label{fig:#2}
\end{figure}
}

\newcommand{\DiagramMacroCtwo}[4]{
\begin{figure}[h!]
    \centering
    \tablefig{#1/#2}
    \tablefig{#1/#4}
    \caption{#3}
    \label{fig:#2}
\end{figure}
}

\newcommand{\sidewidth}{0.6\textwidth}

\newcommand{\DiagramMacroR}[3]{%
  \sbox0{\tablefig{#1/#2}}%
  \edef\sidewidth{\the\dimexpr\textwidth-\wd0-1em\relax}%
  \begin{figure}[h!]
    \floatbox[{\capbeside\thisfloatsetup{capbesideposition={left,top},capbesidewidth=\sidewidth}}]{figure}[\FBwidth]{%
      \caption{#3}%
      \label{fig:#2}%
    }{\usebox0}%
  \end{figure}
}

\newcommand{\Midline}[2]{
{\centering
\tablefig{#1/#2}

}
}

\def\compose{\mathbin{\raise 0.6ex\hbox{\oalign{\hfil$\scriptscriptstyle\mathrm{o}$\hfil\cr\hfil$\scriptscriptstyle\mathrm{9}$\hfil}}}}

\ifanon
\usepackage{tmlr}
\else
\usepackage[accepted]{tmlr}
\fi
\def\month{08}
\def\year{2026}
\def\openreview{\url{https://openreview.net/forum?id=GiO8eom0jD}}

\graphicspath{{figures/}}

\title{
Weaves, Wires, and Morphisms: Formalizing and Implementing the Algebra of Deep Learning
}

\author{
\name{Vincent Abbott}
\thanks{This material is based upon work supported by the Defense Advanced Research Projects Agency (DARPA) under Award No. D25AC00373. The views and conclusions contained in this document are those of the authors and should not be interpreted as representing the official policies, either expressed or implied, of the U.S. Government. This material is based on research sponsored by the Air Force Office of Scientific Research under agreement number FA9550261B038. The U.S. Government is authorized to reproduce and distribute reprints for Governmental purposes notwithstanding any copyright notation thereon.}
\email vtabbott@mit.edu
\\
\addr Laboratory for Information and Decision Systems, Massachusetts Institute of Technology
\\
\AND \name Gioele Zardini
\footnotemark[1]
\email gzardini@mit.edu
\\
\addr Laboratory for Information and Decision Systems, Massachusetts Institute of Technology
}

\date{August 2026}

\begin{document}

\maketitle

\begin{abstract}
Despite deep learning models running well-defined mathematical functions, we lack a formal mathematical framework for describing model architectures. Ad-hoc notation, diagrams, and pseudocode poorly handle nonlinear broadcasting and the relationship between individual components and composed models. This paper introduces a categorical framework for deep learning models that formalizes broadcasting through the novel axis-stride and array-broadcasted categories. This allows the mathematical function underlying architectures to be precisely expressed and manipulated in a compositional manner. These mathematical definitions are translated into human manageable diagrams and machine manageable data structures. We provide a mirrored implementation in Python\bracketpyncd and TypeScript\brackettsncd to show the universal aspect of our framework, along with features including algebraic construction, graph conversion, PyTorch compilation and diagram rendering. This lays the foundation for a systematic, formal approach to deep learning model design and analysis.

\end{abstract}

\section{Introduction}\label{sec:background}
Deep learning models implement precisely defined mathematical computations, yet the way we describe model architectures remains surprisingly informal. 
In practice, architectures are communicated through a mixture of tensor notation, framework-specific code, and hand-drawn diagrams. 
These representations are useful locally, but they do not provide a single formal object on which one can systematically reason. 
As a result, properties that should in principle be derivable from a model's mathematical definition, such as equivalent formulations, efficient low-level implementations, performance models, or realizations in multiple software frameworks, are often discovered through manual derivation and engineering intuition rather than obtained procedurally. 
A more explicit mathematical language for architectures would make these relationships easier to analyze, compare, and eventually automate.

Deep learning models implement mathematical mappings from model inputs $x$ to outputs $\hat{y}$.
These mappings are typically organized as a series of residual connections~\citep{he2015deep}, each implementing $x_{i+1}=f_i(x_i)+x_i$, where each layer $f_i$ is composed of various linear layers typically in an attention~\citep{vaswani2017attention} or feed-forward arrangement. This mapping--which we will call the model architecture--is precisely defined from these formulas. The mysterious ``black-box'' behavior we associate with neural networks is an emergent property and can be separated from the study of architectures in isolation.

Understanding architectures is critical to many of the engineering aspects of deep learning models. A given architecture can be implemented in various ways, with multiple computational sequences implementing the same underlying function. For example, \textit{FlashAttention}~\citep{dao2022flashattention} implements attention in a parallelized, hardware-aware approach which yields the same mathematical result at far greater efficiency and throughput.
The power of deep learning models relies on the ability to deploy an immense quantity of compute towards a task, which restricts the space of reasonable architectures to mathematical functions which can be computed on parallelized hardware.
The need for parallelized hardware stems from fundamental physical limitations on the clock speed and thermodynamic characteristics of individual processors, meaning only parallized, GPU hardware can deliver the needed compute.
Designing architectures for parallelized GPU hardware ushered in modern deep learning with \textit{AlexNet}~\citep{krizhevsky_imagenet_2017}, allowed for LLMs to compute results in parallelized batches with transformers as opposed to sequential recurrent neural networks~\citep{vaswani2017attention}, while newer models such as \textit{DeepSeek-V3}~\citep{deepseek-ai_deepseek-v3_2025} have achieved remarkable efficiency gains by better co-designing software with hardware.
As parallelization is the key aspect of deep learning architectures that enables the necessary quantities of compute to be deployed, parallelization is a critical aspect of understanding architectures.

Understanding architectures and their parallelization properties is therefore a key aspect of innovation in deep learning, however, these innovations have so far proven elusive, indicating that the understanding of architectures is a gap 
For instance, \textit{FlashAttention}~\citep{dao2022flashattention} was first released five years after attention mechanisms~\citep{vaswani2017attention}, and required further refinement and years-long lags to catch up with the latest hardware~\citep{dao2023flashattention2, shah2024flashattention3, zadouri2026flashattention4algorithmkernelpipelining} \textit{despite} the underlying mathematical architecture (attention) being unchanged. \textit{DeepSeek-V3}'s innovations delivered near state-of-the-art results at a fraction of the cost of models from the largest labs, emphasizing the difficulty of architecture-hardware co-design.

In our view, the key missing aspect of mathematically understanding deep learning architectures in a manner useful for the engineering challenge of co-designing parallelized algorithms is understanding broadcasting. \href{https://docs.pytorch.org/docs/2.12/notes/broadcasting.html}{Broadcasting} refers to how an operation is extended over additional axes.
For example, SoftMax is a polymorphic operation defined by $\mathbb{R}^x \rightarrow \mathbb{R}^x$.
When this is taken to be a row-wise operation applied to each of $n$-rows of a tensor, it will then have the form $\mathbb{R}^{nx} \rightarrow \mathbb{R}^{nx}$.
This fundamentally changes the mathematical character of the operation, and we see immediately where confusion can arise. For a useful mathematical understanding to be developed, we need to know whether $x$ or $n$ were the broadcasted or target axes.
For another example, a dot product $\mathbb{R}^b \times \mathbb{R}^b \rightarrow \mathbb{R}$ (which we will refer to as contraction) performed in parallel over $a$-rows for the first input and $c$-columns for the second results in matrix multiplication of the form $\mathbb{R}^{ab}\times\mathbb{R}^{bc}\rightarrow\mathbb{R}^{ac}$.
\textit{Batched} matrix multiplication over an additional $n$-sized axis then has a form $\mathbb{R}^{nab} \times \mathbb{R}^{nbc} \rightarrow \mathbb{R}^{nac}$.
Convolved matrix multiplication again has its own complex %
shape, taking $\mathbb{R}^{x{c_{in}}} \times \mathbb{R}^{{w}{c_{in}}{c_{out}}} \rightarrow \mathbb{R}^{{(x+w-1)}{c_{out}}}$.
We see that broadcasting can take various shapes and forms, and any hope of mathematically understanding this behavior and therefore the parallel character of deep learning models requires clearly distinguishing and defining these forms for mathematical manipulation.

What are the benefits of understanding this behavior?
At the most basic level, clearly defining broadcasting is required to simply know what we mean when we say we have an $\mathbb{R}^{nx}\rightarrow \mathbb{R}^{nx}$ SoftMax operation. Without the broadcasting information, the expression may be implicitly known but is technically ambiguous and therefore impervious to robust mathematical analysis.
Broadcasting is key to understanding how mathematical architectures map to parallelized hardware, and kernels (tiled algorithms which compute a parallelized result) are defined with respect to the broadcasted axes.
Performance models can be derived from these kernels, and therefore clearly defined broadcasting allows performance modeling to be a matter of applying consistent mathematical rules.
Additionally, in practice deep learning models are bandwidth and memory bound~\citep{gholami_ai_2024}.
Tricks which reduce bandwidth and memory constraints are key to innovations such as \textit{FlashAttention} and \textit{DeepSeek-V3}.
As is shown in \textit{FlashAttention on a Napkin}~\citep{abbott2025napkin}, clearly showing broadcasting allows FlashAttention and its performance model to be procedurally derived.
The techniques in \textit{FlashAttention on a Napkin} have subsequently been used to develop low-level algorithms for attention variants~\citep{abbott2025accelerating}, showing the utility of such an approach.
This is an immense gain over the manual derivation typically required to derive such algorithms and opens up the door to automatically derived low-level kernels and performance models, an achievement which escapes the compilation tools of packages such as \href{https://docs.pytorch.org/tutorials/intermediate/torch_compile_tutorial.html}{PyTorch}.
Fundamentally, as parallelized computing is \textit{the} critical property of architectures which overcomes the physical constraints of single core processing and therefore enables deep learning, broadcasting is \textit{the} critical mathematical property of deep learning architectures as opposed to other subjects of mathematical study, similar to how symmetry is the critical property of many physical systems.

Standard methods of describing broadcasting are haphazard and fall short of the task of expressing deep learning models as they appear in practice.
Standard deep learning notation borrows from linear algebra~\citep{goodfellow2016deep}. %
This works somewhat for simple matrix multiplication, can be elegantly extended with Einstein notation~\citep{rogozhnikov2022einops}, but completely fails for non-linear operations such as SoftMax.
Named tensor notation or explicit re-expressions~\citep{chiang2023named, phuong2022formal} provide additional clarity but provide no additional tools of mathematical analysis.
Neural circuit diagrams~\citep{abbott2024ncd} fully describe deep learning models and supply useful blueprinting diagrams. They are the basis of the successful \textit{FlashAttention on a Napkin} approach~\citep{abbott2025napkin}, showing their utility for mathematical analysis and deriving efficient algorithms, and have indeed been used to construct new algorithms~\cite{abbott2025accelerating}.
However, they so far lack a formal mathematical representation outside of diagrams which can be used for automated computational analysis.

Developing such a formal mathematical representation coupled with an appropriate code implementation requires the use of category theory.
Category theory is the mathematics of composition and abstraction~\citep{act4e2024}.
Just as group theory's theoretical tools become pertinent for physical systems characterized by symmetries, category theory's compositional and layered abstraction tools become pertinent for composed deep learning systems that need to be analyzed at multiple levels of abstraction (mathematical architecture, high-level algorithm, low-level code).
This is especially relevant to tackling understanding the parallel nature of deep learning architectures as parallelization is itself a composition property--sequentially composed row-wise operations can be interpreted as a single, composed row-wise operation.
Category theory provides a language for composed systems by identifying \textit{morphisms}, fundamental compositional units, which are anchored by \textit{objects}, that restrict the domain and codomain of allowed compositions. This is typically extended by \textit{products}, which allow for parallel constructs.
When composition and products are closed, we form a category, and this structure covers various mathematical constructs such as functions anchored by sets, the category $\mathbf{Set}$, linear maps anchored by linear spaces, $\mathbf{Vect}$, or Markov kernels anchored by probability spaces, $\mathbf{Stoch}$~\citep{fritz2023markov}.
Systems defined by composition and products follow standard rules, allowing access to a ``library'' of robust, standard mathematical transformations between various lenses of analysis such as converting an expression into formal diagrams or hypergraphs~\citep{piedeleu2025string}.
Category theory has been used to describe aspects of deep learning models, with backpropagated algorithms and data types being described by $\mathbf{Para}$~\citep{fong2019backprop, cruttwell2024parametric, cruttwell2022categorical, gavranovic2024position, shiebler2021category}, model symmetries and equivariances being approached with geometric deep learning~\citep{velivckovic2018graph, bronstein2021geometric}, and even to describe the algebra of CuTe layouts for low-level execution~\citep{colfax2025categorical}.
The co-design of complex systems, a necessary goal of architectural analysis to have engineering utility, can be viewed through the resource/functionality composition of the $\mathbf{DP}$ category of design problems~\citep{zardini2023codesign}.
A categorical approach, then, has the utility of allowing for an integrated web of tools, $\mathbf{Vect}$ allows for linear components of models to be rearranged, $\mathbf{Stoch}$ provides a lens for the probabilistic aspects of dropout or quantization, $\mathbf{Para}$ allows for backpropagation to be natively built from a category-theoeretic lens, and $\mathbf{DP}$ would allow for engineered co-design of deep learning software and hardware.

By using category theory to define broadcasting coupled with a code implementation, we aim to lay the foundation for deep learning architectures to be clearly defined in a manner that opens up powerful, robust tools for automated analysis. The scope of this paper is to (1) establish how mathematical definitions can be converted to reliable representations across written expressions, diagrams, and code (Section~\ref{sec:construction-rules}) (2) establish how a generic product category can be fit within this framework, granting access to categorical compositional construction (Section~\ref{sec:category}), (3) describe the broadcasted-array category $\mathbf{Br}$ which is able to express parallelized, broadcasted operations and finally (Section~\ref{sec:imposing_structure}) (4) show how this framework can be used as the basis for an implementation with an integrated suite of features, including describing the full architecture of \textit{DeepSeek-V3}, a major model used in practice including attention, various datatypes, and mixtures of experts (Section~\ref{sec:results}). This supplies both a concrete mathematical description of deep learning models, as used in practice with critical broadcasting information clearly outlined, coupled with a modular extendable implementation to be integrated into future tools.

\section{Encoding Mathematics} \label{sec:construction}

A formal framework for deep learning is only useful if the resulting objects can be manipulated by both humans and machines. 
In practice, we want to move between at least three views of the same object: a mathematical definition, a human-facing representation such as notation or a diagram, and a machine-facing representation such as code.
This section introduces a general framework for doing so. 
The key idea is to separate the mathematical objects we wish to describe from the concrete terms used to represent them.

The \textbf{constructed term framework} we introduce here as many desirable properties. It ensures that mathematical definitions correspond to representations, whether that be in formal written expressions,  diagrams, or code. The generated expressions allow for open variables, enabling algebra to be performed and expressions to represent a polymorphic class of expressions with variable axis sizes that are open to configuration. This means that diagrams, code, and mathematics represent deep learning architectures as they are used in practice. Some axes may be non-static variable sizes depending on the input, while others may be configured to a specific size during deployment.

\subsection{Nominal Mathematical Entities and Real Terms}

We begin with a set~$\Gamma$ of nominal mathematical entities representing the platonic mathematical expressions we wish to implement.
These entities are equipped with a family of basic structure maps, which we call \textbf{core properties}. 
For each~$k \in K$, a core property has the form~$\pi_k : \Gamma_{k,i} \to \Gamma_{k,f}$, where~$\Gamma_{k,i}, \Gamma_{k,f} \subseteq \Gamma$ specify the domain on which the property is defined and the codomain in which it lands. 
We also assume that finite products of entities are again entities, so that~$\prod_{i \in I} \Gamma \subseteq \Gamma$.
This lets us express multi-input structure without leaving the ambient space of entities.

\begin{definition}[Mathematical Entities] A system of mathematical entities consists of:
\begin{itemize}
\item A set of \textbf{entities} $\displaystyle \Gamma $. We include finite products of entities, so that $\prod_{i \in I} \Gamma \subseteq \Gamma$.
\item A $\displaystyle K$-family of \textbf{core properties}~$\pi_k : \Gamma_{k,i} \rightarrow \Gamma_{k,f}$ so that $\Gamma_{k,i}=\dom \pi_k$ and $\Gamma_{k,f}=\img \pi_k$
\end{itemize}
\end{definition}

A \textbf{constructed term system} is a representation layer for $\Gamma$, providing real terms and expressions we can implement, visualize and manipulate while guaranteeing correspondence to an underlying mathematical system. 
Its terms may be symbolic expressions, diagrams, or code objects, but in every case they must faithfully represent the same underlying entities. 
Formally, we introduce a set of terms~$G$ together with an interpretation function~$V_G : G \to \Gamma$.
The map~$V_G$ tells us which mathematical entity a term denotes. 
As with entities, we include finite products of terms and interpret them componentwise:
\begin{equation*}
\left( \prod_{i \in I} g_i \right) \compose V_G
=
\prod_{i \in I} \left( g_i \compose V_G \right).
\end{equation*}
For each core property~$\pi_k$, the term system should provide an internal counterpart
\begin{equation*}
G_{k,i} := \preimg{V_G}(\Gamma_{k,i}),
\qquad
G_{k,f} := \img{p_k} \subseteq \Gamma_{k,f}
\qquad
    p_k : G_{k,i} \rightarrow G_{k,f},
\qquad
p_k \compose V_G = V_G \compose \pi_k
\end{equation*}

so that evaluating inside the term system agrees with evaluating in the mathematical space after interpretation. 
This is the basic soundness condition of the framework.

\begin{definition}[Constructed Term System]
A constructed term system consists of:
\begin{itemize}
    \item A set of \textbf{terms}~$G$, again closed under finite products:~$\prod_{i \in I} G \subseteq G$.
    \item An \textbf{interpretation function}~$V_G : G \to \Gamma$. 
    On elements $g \in G$, we may write~$g \compose V_G = \dbb{g}$.
    \item For each~$k \in K$, an internal \textbf{core property}~$p_k : G_{k,i} \rightarrow G_{k,f}$ defined over $G_{k,i}:= \preimg{V_G}(\Gamma_{k,i})$ and $G_{k,f}:=\img{p_k} \subseteq \preimg{\Gamma_{k,f}}$, satisfying \textbf{internal evaluation}, $p_k \compose V_G = V_G \compose \pi_k$.
\end{itemize}
\end{definition}

\subsection{Implementing Core Properties}
Operationally, terms natively carry information within subexpressions.
In a visual, human readable form these are notated, while in code these are fields inside data structures, and in either case we have access to native extraction functions.
We use two complementary methods to implement core properties which ensure extendability and soundness as described above.
In one case, we use \textbf{construction rules}, so that a term remembers the inputs from which it was built, contravariantly storing~$G_{k,i}$ in~$G_{k,f}$ (reverse of $p_k$).
In the other, we use \textbf{root terms}, storing the data needed to expose certain properties directly, covariantly containing~$G_{k,f}$ in~$G_{k,i}$ (forward of $p_k$). 

\subsubsection{Construction Rules} \label{sec:construction-rules}
We start with the contravariant case.
We choose a subset of core properties~$\displaystyle C\subseteq K$ to be implemented as \textbf{construction rules},~$p_c=T_c:G_{c,i} \to G_{c,f}$ so that we can extract builder terms via a native extraction function~$\hat{T}_c:G_{c,f}\to G_{c,i}$.
In other words, $T_c$ operationally acts as a wrapper that serves the role of indicating the construction rules used and remembering the inputs provided, think of how the expression ``$3+2$'' indicates a construction rule ($+$) along with builder terms ($2$ and $3$).
Then, the set of terms $G_{c,f}$ corresponds to data types or expressions generated by this rule i.e. those that contain a ``$+$'' symbol.
For soundness, we require that~$\hat{T}_c \compose T_c = \operatorname{Id}_{G_{c,f}}$, meaning that reconstruction yields the same expression, and that~$V_{G_{c,f}}:G_{c,f} \to \Gamma_{c,f}$ is defined by~$V_{G_{c,f}}=\hat{T}_c \compose \pi_k$, ensuring the construction rule nominally corresponds to the underlying core property.

\begin{definition}[Construction Rules]
For a chosen subset~$\displaystyle C\subseteq K$ of core properties, we implement~$\displaystyle p_{c}$ for~$\displaystyle c\in C$ as a construction rule~$\displaystyle T_{c} :G_{c,i}\rightarrow G_{c,f}$ so that:
\begin{itemize}
    \item We have a native recovery function~$\displaystyle \hat{T}_{c} :G_{c,f} \to G_{c,i}$ so that $\displaystyle \hat{T}_{c} \compose T_{c} =\text{Id}_{G_{c,f}}$ (i.e., for every~$g \in \img{T_c}$, we have~$g \compose \hat{T}_c \compose T_c = g$).
    This formalizes the notion that constructed term contains its builder terms.
    \item Interpretation is defined by $\displaystyle V_{G_{c,f}} =\hat{T}_{c} \compose V_{G} \compose \pi _{c}$.
\end{itemize}
\end{definition}

Compound construction rules are built from placing rules in parallel or composing them. 
For parallel rules~$\displaystyle T_{c} \times T_{d}$, we have a recovery
\begin{equation*}
    \displaystyle \hat{T}_{c} \times \hat{T}_{d} :G_{c,f}\times G_{d,f}\rightarrow G_{c,i} \times G_{d,i}.
\end{equation*}
In the case that $G_{c,f} \subseteq G_{d,i}$, we have a composed rule $T_c \compose T_d$ with a recovery;
\begin{equation*}
    \hat{(T_c \compose T_d)}=\hat{T}_{d} \compose \hat{T}_{c} :G_{d,f}\rightarrow G_{c,i}.    
\end{equation*}
In both cases, recovery and soundness holds. 
As compound rules are built from these elemental combinations, complex expressions built from construction rules can be decomposed and interpreted as non-constructed root terms.

In addition to providing soundness and interpretation from merely defining a wrapper, these construction rules grant as access to \textbf{axioms}. 
An axiom is a statement that two construction methods yield the same result.
Consider associativity, where for instance we have $\dbb{(x+y)+z} = \dbb{x+(y+z)}$.
When this occurs, we can use an axiomatically unified expression $x+y+z$ which can be interpreted to the correct result.
As the reconstruction rules are not guaranteed in the forward case, $T_c \compose \hat{T}_c \neq \operatorname{Id}_{G_{c,i}}$ we do not need to natively maintain all the construction information, but merely that which is sufficient for $\pi_c$ to provide a correct interpretation.
Axiomatic unification stems from the underlying nominal mathematical entities as, over the unified term $g_{c \cap d}$, it implies that over $\gamma_c = g_{c \cap d} \compose \hat{T}_c$ and $\gamma_d = g_{c \cap d} \compose \hat{T}_d$ we have $\gamma_c \compose \pi_c = \gamma_d \compose \pi_d$.
In code implementations, construction rules rarely generate axioms.
However, in diagrams we often use these principles to simplify expressions.

\subsection{Root Terms}
For non-constructed rules $\displaystyle \ell \in K\backslash C$, we do not use simple wrappers. 
Rather, we either embed $\displaystyle G_{\ell ,f}$ in non-constructed \textbf{root terms} or define a method over applicable constructed terms. 
Non-constructed root terms are separated into \textbf{root types}.
A root type corresponds to a subset $\displaystyle r\subseteq K\backslash C$ of construction rules for which only those are applicable. 
Terms in root types $\displaystyle g_{r} \in G_{r}$ are embedded with data which can derive all $\displaystyle \ell \in r$.

\begin{definition}[Root Terms] For a subset $\displaystyle r\subseteq K\backslash C$, the root term type $r$ covers all terms with the $r$-collection of core properties which cannot be generated by a construction rule in the non-identity case i.e. when a construction rule's axioms leaves an expression unchanged. Hence, they describe a certain collection of non-constructable terms which require core properties to be natively supplied.
\begin{align*}
G_{r} & =\left\{g\in G\ |\ ( \forall \ell \in K\backslash C.g \compose V_G \in \Gamma_{\ell} \leftrightarrow \ell \in r) \land \left( \forall \ell \in C.g\notin G_{c,f} \lor g = g\compose T_c\right)\right\}.
\end{align*}
For the root term type $r$, we define $p_{\ell,r}: G_r \to G_{\ell, f}$ for $\ell\in r$ via native extraction.
\end{definition}

Interpretation of a root term $\displaystyle g_{r} \in G_{r}$ results in an entity $\displaystyle \gamma _{r} \in \Gamma _{r}$ so that core properties correspond via $p_{\ell,r}\compose V_G = V_G \compose \pi_\ell$.
As we construct the data of terms by assigning $\displaystyle g_{\ell ,f}$ properties, we use \textbf{templates} to restrict the creation of this data only to terms which indeed correspond to an entity. 
Entities may differ on some non-core property which is not the focus of the representation. 
In this case, the root terms are equipped with \textbf{metadata} tags which can be used by some alternative representation.

For non-constructed properties $\ell \in K \backslash C$ of non-identity constructed terms $g \in G_{\ell,i}$, we must define \textbf{methods}. These are functions $m_{c, \ell}:G_{c,i} \to G_{\ell,f}$ so that the property $p_\ell$ can be derived with respect to the extracted builder terms via $\pi_\ell = \hat{T}_c \compose m_{c, \ell} \compose V_G$. These functions are non-native, and must be defined according to the specific construction rule and relevant property.

Overall, this leads to a table of applicability and interpretation over all terms (Table~\ref{tab:applicability-interpretation}).

\begin{table}[tb]
\centering   
    \begin{tabular}{|p{0.12\textwidth}|p{0.20\textwidth}|p{0.12\textwidth}|p{0.20\textwidth}|p{0.20\textwidth}|}
    \hline 
        \textbf{Term Type}
        & \textbf{Associated Data}
        & \textbf{Constructed Property}
        & \textbf{Non-constructed Property}
        & \textbf{Interpretation}
        \\
    \hline 
        Product 
        & $\displaystyle g_{\Pi j} =\Pi _{j\in J} g_{j}$ 
        & \textit{Wrapping} 
        & Method
        
        $\displaystyle m_{\Pi j,\ell } :\Pi _{j\in J} G_{j}\rightarrow G_{\ell ,i}$

        $\displaystyle \pi _{\ell } =m_{c,\ell } \compose V_{G}$
        &
        $(\Pi _{j\in J} g_{j}) \compose V_{G} =\Pi _{j\in J}( g_{j} \compose V_{G})$
        \\
    \hline 
        Constructed Term
        & $\displaystyle g_{c,f} \mapsto g_{c,i}$ via $\displaystyle \hat{T}_{c}$
        & \textit{Wrapping}
        & Method
        
        $\displaystyle m_{c,\ell } :G_{c,i}\rightarrow G_{\ell ,f}$
        
        $\displaystyle \pi _{\ell } =\hat{T}_{c} \compose m_{c,\ell } \compose V_{G}$
        & $g_{c, f} \compose V_G = g_{c, f} \compose \hat{T}_c \compose V_G \compose \pi_c$
        \\
    \hline 
        Root Term 
        & $\displaystyle g_{r} \mapsto \mu _{r} \times \Pi _{\ell \in r} g_{\ell ,f}$
        
        \textit{Where }$\displaystyle m_{r}$\textit{ is non-core metadata.}
        & \textit{Wrapping} 
        & \textit{Data Extraction}
        
        $\displaystyle p_{r,\ell } :G_{r}\rightarrow G_{r,\ell }$
        
        $\displaystyle \pi _{\ell } =p_{r,\ell } \compose V_{G}$

        & $\displaystyle g_{r} \compose V_{G} =\gamma _{r}$
        
        $\displaystyle \forall \ell \in r.g_{r} \compose p_{r,\ell } \compose V_{G} =\gamma _{r} \compose \pi _{\ell }$ 
        \\
    \hline
    \end{tabular}
    \label{tab:applicability-interpretation}
    \caption{Applicability and interpretation for the main term types.}
\end{table}

\subsection{Placeholder Terms} \label{sec:uid}

Many useful expressions are only partially instantiated. For example, the expression ``$x+y+z$'' does not denote a concrete real numbers; instead, it denotes a construction pattern with three open inputs. In the present framework, this corresponds to a compound construction rule~$T_{x+y+z} : \mathbb{R}^3 \rightarrow \mathbb{R}$ whose inputs have not yet been fixed.
Such partially constructed terms are important because they preserve the degrees of freedom in an expression while still exposing its compositional structure.

In symbolic notation and diagrams, these open slots appear as free symbols. 
In code, they appear as terms equipped with unique identifiers (UIDs). 
This lets us manipulate expressions symbolically before committing to concrete inputs. 
For instance, imposing the substitution~$y := z$ transforms the expression ``$x+y+z$'' into ``$x+y+y$'', and correspondingly transforms~$T_{x+y+z} : \mathbb{R}^3 \rightarrow \mathbb{R}$ into~$T_{x+y+y} : \mathbb{R}^2 \rightarrow \mathbb{R}$.
Equivalently, the substitution may be viewed as a meta-level rearrangement~$[0,1,1]_{(\mathbb{R})_{i\in 3}} : \mathbb{R}^2 \rightarrow \mathbb{R}^3$ (denoting the first output maps from the first input, and second and third outputs map from the second input), anticipating the rearrangement structure introduced in Section~\ref{sec:category}.

By equipping the term system with placeholder terms, we obtain a uniform account of symbolic manipulation across notation, diagrams, and code. 
We can scan an expression for free symbols or UID-tagged terms, generate the corresponding configuration space automatically, and impose canonical substitutions when performing algebraic simplifications.
Furthermore, it means that the partially constructed terms generated by our expression with free inputs, whether in diagrams or code, represent polymorphic expressions which accomodate a range of variables.

\newcommand{\MidlineCode}[3]{
\noindent
\begin{minipage}{0.4\textwidth}
\Midline{#1}{#2}
\end{minipage}
\begin{minipage}{0.4\textwidth}

\begin{lstlisting}[language=Python]
#3
\end{lstlisting}

\end{minipage}
}

\section{Categories} \label{sec:category}

Category theory provides a descriptive framework for managing compositional systems. Many mathematical structures follow a common pattern, with a key compositional operation accompanied by parallel products. For instance, functions compose sets and can be put in Cartesian parallel (the category $\mathbf{Set}$). Markov kernels propagate probability distributions across probability spaces (the category $\mathbf{Stoch}$). Kernels independently applied over joint probability spaces then yield products. Directed graphs can be composed in sequence, should their targets and origins match, and these graphs and nodes can be put in parallel. Category theory provides the minimal descriptive tools to describe the compositional structure of these systems, and thereby yields universal mathematical properties applicable across them all.

For analyzing deep learning models, these tools become invaluable. A compositional description of a deep learning model serves as a template for various interpretations to be applied--it can be viewed as a pure mathematical function, interpreted as a probabilistic evolution, manipulated as an algebraic graph, or transformed into algorithmic instructions. As outlined in the background (see Section~\ref{sec:background}), this allows for a robust library of tools to be developed. The scope of this work is to provide the basic mathematical framework on which such tools can be based.

We begin with a basic outline of a monoidal product category equipped with rearrangements, which serves as a template for various systems that will be specific in Section~\ref{sec:broadcasting} to describe deep learning models. This template views a system as being composed of \textbf{morphisms} each of which has a \textbf{domain} and \textbf{codomain} object. When these match, we have \textbf{composition}, an associative operation equipped with identities. \textbf{Products} place these in parallel with certain compatibility requirements. For a concrete example, the category $\mathbf{Set}$ has functions as morphisms. Each function has a domain and codomain set, which serve as objects. Composition is sequential function application. The product of objects is provided by Cartesian products, and the product of morphisms has them act on separate tuple segments. As sequential application is associative, and the parallel application of composed functions is equivalent to the sequential composition of parallel functions, \textbf{bifunctoriality} as the product compatibility requirement is well-defined.

\begin{definition}[Monoidal Product Category] A monoidal category $\displaystyle \mathcal{C}$ has; \label{def:monoidal-category}
\begin{itemize}
\item \textbf{Objects. }A collection of objects $\displaystyle A\in \text{Ob}\mathcal{C}$. The product of objects $\displaystyle \otimes :\text{Ob}\mathcal{C} \times \text{Ob}\mathcal{C}\rightarrow \text{Ob}\mathcal{C}$ is an associative operation with a distinguished unit object $\displaystyle \mathbbm{1} \in \text{Ob}\mathcal{C}$.
\item \textbf{Morphisms. }A collection of morphisms $\displaystyle f\in \text{Mo}\mathcal{C}$, so that each morphism has a \textbf{domain} and \textbf{codomain} object. The collection of morphisms with domain $\displaystyle A$ and codomain $\displaystyle B$ are denoted $\displaystyle \mathcal{C}( A,B)$, and we write $\displaystyle f\in \mathcal{C}( A,B)$ as $\displaystyle f:A\rightarrow B$.
\begin{itemize}
\item The composition of morphisms $\displaystyle \compose :\mathcal{C}( A,B) \times \mathcal{C}( B,C)$ is an associative operation. Each object has an identity morphism, so that $\displaystyle f\compose \text{Id}_{B} =f=\text{Id}_{A} \compose f$.
\item The product of morphisms $\displaystyle f:A\rightarrow B$ and $\displaystyle g:C\rightarrow D$ yields $\displaystyle f\otimes g:A\otimes C\rightarrow B\otimes D$ is an associative operation with $\displaystyle \text{Id}_{\mathbbm{1}}$ serving as the unit.
\item These exhibit \textbf{bifunctoriality}, so that $\displaystyle ( f\otimes h) \compose ( g\otimes k) =( f\compose g) \otimes ( h\compose k)$ when the compositions are well-defined.
\end{itemize}
\item \textbf{Structural Guarantees }ensuring associativty of products and the behaviour of the unit object~\citep{selinger2010survey}.
\end{itemize}
\end{definition}

Monoidal product categories are additional equipped with \textbf{rearrangements}. These are structural morphisms corresponding to swapping (count neutral), copying (count increase), and deleting (count decrease) information.
For instance, copying provides us with $[0,0]_X:X\rightarrow X\otimes X$.
The tuple $[0,0]$ (the remapping) indicates where output data comes from, and can be viewed as a finite function $\mu:J \to I$.
Swapping can be shown by $[1,0]_{(X,Y)}:X\otimes Y \rightarrow Y \otimes X$ and deletion by $[]_X:X\to \mathbbm{1}$.
Monoidal product categories differ in which rearrangements are allowed, and which rearrangements are assigned the structurally critical \textbf{naturality} property.
If, for instance, count increase (copying) is natural then copying after a morphism is equivalent to copying beforehand and running it twice, so that $f\compose [0,0]_Y = [0,0]_X \compose (f \otimes f)$.
By indicating which rearrangements are natural, we can yield generic structural templates.
For instance, in $\mathbf{Set}$ copying is natural -- copying the result of a function is equivalent to copying the input and running the function twice.
In $\mathbf{Stoch}$ copying is not natural, consider that copying the result of a dice roll yields a distinct distribution from rolling two independent dice. This describes other systems as well -- the computational cost of copying the result of a function is distinct from copying an input and running the function twice,
and copying the result from a single linear layer is architecturally distinct from copying an input and feeding it to two distinctly trained linear layers.
In the typical case of a symmetric monoidal category, swapping (count neutral rearrangements) is allowed and natural, and yields a graph structure~\cite{piedeleu2025string}.
Rearrangements serve to embed this structural information into our template and the resulting mathematics, allowing our diagrams and code implementations to consider these characteristics.

\begin{definition}[Rearrangements] A monoidal category is equipped with allowed remappings, finite functions $\displaystyle \mu :I\rightarrow J$, a subset of which are considered \textbf{natural}. Remappings are classified by their \textbf{count}, which indicates the number of overlaps between inputs.
Neutral counts have each input appear once in the outputs, \textbf{increased count }allows for inputs to appear multiple times, and \textbf{decreased count }allows inputs to not be present in the outputs.

\begin{align*}
    \operatorname{Count^=}(\mu)&
    =\{\forall j \in J. \card{
    \{ i \in I. \mu(i)=j \}
    } = 1 \}
    \\
    \operatorname{Count^+}(\mu)&
    =\{\exists j \in J. \card{
    \{ i\in I.\mu(i)=j \}
    } > 1\}
    \\
    \operatorname{Count^-}(\mu)&
    =\{\exists j \in J. \card{
    \{ i\in I.\mu(i)=j \}
    } < 1\}
\end{align*}

Given a family of objects $\displaystyle ( A_{i})_{i\in I}$, allowed remappings generate rearrangement morphisms $\displaystyle [ \mu ]_{( A_{i})_{i\in I}} :\Pi _{i\in I} A_{i}\rightarrow \Pi _{j\in J} A_{\mu ( j)}$ so that;
\begin{itemize}
\item Identity $\mu=\text{Id}_I : I\to I$ reamappings are always allowed and natural, and generate identity morphisms, $[\text{Id}_I]_{(A_i)_{i\in I}} = \text{Id}_{\Pi_{i \in I} A_i}$
\item The product with another rearrangement gives $\displaystyle [ \mu ]_{( A_{i})_{i\in I}} \otimes [ \rho ]_{( B_{k})_{k\in K}} =[ \mu \oplus \rho ]_{( A_{i})_{i\in I} \times ( B_{k})_{k\in K}}$.
\item Composition with another rearrangement gives $\displaystyle [ \mu ]_{( A_{i})_{i\in I}} \compose [ \nu ]_{( A_{\mu ( j)})_{j\in J}} =[ \nu \compose \mu ]_{( A_{i})_{i\in I}}$.
\end{itemize}

If $\displaystyle \mu $ is natural, then the rearrangement applied on the product of a family of morphisms $\displaystyle ( f_{i})_{i\in I}$ where $\displaystyle f_{i} :A_{i}\rightarrow B_{i}$ yields;

\begin{align*}
\left(\prod _{i\in I} f_{i}\right) \compose \rng{\mu}{(B_{i})_{i\in I}} & =\rng{\mu}{( A_{i})_{i\in I}} \compose \left(\prod _{j\in J} f_{\mu ( j)}\right)
\end{align*}

\end{definition}

\subsection{Implementing Categories}
To implement a monoidal product category, we will rely on lone objects $L$ and root morphisms $M$. This loses the potential structure of expressing nested product objects. Owing to the associativity of the monoidal product, however, nested products are isomorphic to flat products and we do not lose generality~\citep{joyal1991geometry, wilson_2024}. In the implemented lone object product category, objects are generated from the flat product of predefined lone objects. Morphisms are generated from category specific root morphisms and rearrangements placed into product and compositional constructs. Finally, we have blocks, which serve as construction rules for functions $B:\mathcal{C}(A,B) \to \mathcal{C}(A,B)$ and therefore allow for loops and aesthetic tags to be present in expressions.

We now realize Definition~\ref{def:monoidal-category} in diagrams and code using the construction scheme of Section~\ref{sec:construction}. 
Product objects, composition, products of morphisms, and blocks are construction rules. 
Root morphisms and rearrangements are root terms. 
Domains, codomains, and rearrangement data are exposed by metadata and methods. 
The high-level scaffold is shared by all product categories; specific categories arise by choosing the root morphisms and the class of allowed and natural remappings.

\begin{itemize}
\item \textbf{Objects}~$\displaystyle \text{Ob}\mathcal{C}$\textbf{.} Objects are terms which anchor composition. 
Every object is represented as a tuple of \textit{lone objects}~$\displaystyle A=\Pi _{i\in I} A_{i}$, together with the unit object~$\displaystyle \mathbbm{1} =\Pi _{i\in \emptyset }$. 
In formal diagrams, objects are drawn as wires with arrows.

\noindent
\begin{minipage}{0.35\textwidth}
\Midline{2Category}{category0_objects}
\end{minipage}
\begin{minipage}{0.55\textwidth}

\begin{lstlisting}[language=Python]
class ProductObject[L]:
    content: Prod[L]
\end{lstlisting}

\end{minipage}

\item \textbf{Morphisms}~$\displaystyle \text{Mo}\mathcal{C}$. 
Morphisms are composable terms with a \textit{domain} and \textit{codomain} object.
These interfaces determine when sequential composition is valid, and in diagrams they appear to the left and right of a morphism.

\noindent
\begin{minipage}{0.35\textwidth}
\Midline{2Category}{category1_morphism}
\end{minipage}
\begin{minipage}{0.55\textwidth}

\begin{lstlisting}[language=Python]
abstract class Morphism[L]:
    def dom() -> ProdObject[L]: ...
    def cod() -> ProdObject[L]: ...
\end{lstlisting}

\end{minipage}

Compound morphisms are built from composition and products, ultimately terminating in seed morphisms equipped with metadata. 
At minimum, that metadata supplies the domain, codomain, and a symbolic or pictorial description of how the morphism acts.

\Midline{2Category}{category3_seed}

\item \textbf{(Sequential) Composition.} Composition combines morphisms horizontally whenever the codomain of one matches the domain of the next. 
The resulting term is again a morphism, and associativity means that only the order of morphisms matters, not the parenthesization.

\noindent
\begin{minipage}{0.35\textwidth}
\Midline{2Category}{category4_composed}
\end{minipage}
\begin{minipage}{0.55\textwidth}

\begin{lstlisting}[language=Python]
class Composed[L, M: Morphism[L]]
    (Morphism[L]):
    content: Prod[M]
    def dom(): return content[0].dom()
    def cod(): return content[-1].cod()
    
\end{lstlisting}

\end{minipage}

\item \textbf{(Parallel) Product.} The product of objects concatenates their lone object contents, while the product of morphisms concatenates the corresponding domains and codomains. In diagrams, this is vertical stacking.

\noindent
\begin{minipage}{0.35\textwidth}
\Midline{2Category}{category5_product}
\end{minipage}
\begin{minipage}{0.55\textwidth}

\begin{lstlisting}[language=Python]
class ProductOfMorphisms
    [L, M: Morphism[L]]
    (Morphism[L]):
    content: Prod[M]
    def dom(): return ProdObject(concat(
        m.dom() for m in content))
    def cod(): return ProdObject(concat(
        m.cod() for m in content))
    
\end{lstlisting}

\end{minipage}

Products place morphisms in parallel, so bifunctoriality allows horizontal and vertical composition to be interchanged when shapes match. Because diagrammatic terms are built from $\compose$ and $\otimes$, this law is axiomatically enforced in diagrams (see Section~\ref{sec:construction-rules}).

\Midline{2Category}{category8_bifunctoriality}

\item \textbf{Rearrangements.} \label{sec:rearrangements} Given a domain~$\displaystyle A=\Pi _{i\in I} A_{i}$ and a mapping~$\displaystyle \mu :J\rightarrow I$, we generate a rearrangement morphism~$\displaystyle [ \mu ]_{(A_i)_{i\in I}} :\Pi _{i\in I} A_{i}\rightarrow \Pi _{j\in J} A_{\mu ( j)}$.

\noindent
\begin{minipage}{0.35\textwidth}
\Midline{2Category}{category6_rearrangement}
\end{minipage}
\begin{minipage}{0.55\textwidth}

\begin{lstlisting}[language=Python]
class Rearrangement[L](Morphism[L]):
    mapping: Prod[int]
    _dom: Prod[L]
    def dom(): return ProdObject(_dom)
    def cod():
        return ProdObject(
            _dom[mu_j]
            for mu_j in self.mapping)
    
\end{lstlisting}

\end{minipage}

When the mapping is the identity, we generate an identity morphism.

\Midline{2Category}{category2_identity}   

\item \textbf{Blocks.} Blocks serve an organizational role in diagrams and code. 
They can distinguish functionality, group subexpressions, or indicate repeated structure without changing the underlying type.
They are diagrammed by placing a backdrop behind an expression.

\noindent
\begin{minipage}{0.35\textwidth}
\Midline{2Category}{category7_block}
\end{minipage}
\begin{minipage}{0.55\textwidth}

\begin{lstlisting}[language=Python]
class Block[L, M: Morphism[L]](Morphism[L]):
    body: M
    # Indicates the block's function.
    block_tag: BlockTag = BlockTag()
    def dom(): return body.dom()
    def cod(): return body.cod()
    
\end{lstlisting}

\end{minipage}

\end{itemize}

Finally, this generates a recursive structure for defining the morphisms within a product category from the distinct root morphisms \texttt{M}. Hence, we have;
\begin{lstlisting}[language=Python]
type ProductCategory[L, M: Morphism[L]] = (
    M | Rearrangement[L]
    | Composed[L, ProductCategory[L, M]] 
    | Block[L, ProductCategory[L, M]]
    | ProductOfMorphisms[L, ProductCategory[L, M]]
)
\end{lstlisting}

\section{Array-Broadcasted Category}

\subsection{Monoidal Arrays}
The structure outlined in Section~\ref{sec:category} will now be used to generate a template for arrays and broadcasted operations. This raises a number of challenges.
We need a notion of lifting which generates reasonable objects and morphisms, respects the underlying monoidal rearrangement structure of the category, and allows for an algebra of lifted indexes.
A naive approach would be to encode arrays $\mathbb{R}^n$ as the set of morphisms $\mathcal{C}(n,\mathbb{R})$ and to lift $f:A \to B$ via the hom-functor $\mathcal{C}(X,f):\mathcal{C}(X,A)\to \mathcal{C}(X,B)$ which implements $g \mapsto g\compose f$ for $g:X \to A$.
This provides us access to useful algebra, including the Yoneda lemma, which allows us to work with rearranging data within arrays (see Section~\ref{sec:yoneda-new}).
However, the hom-functor approach only applies to Cartesian closed categories i.e. copying being natural \textit{and} hom-sets being self-contained.
This raises issues as the set $\mathcal{C}(\mathbb{R}, \mathbb{R})$ is unmeasurable, making a probabilistic or computational interpretation intractable.
Our notion of lifting, therefore, needs to use only the guaranteed monoidal rearrangement structure outlined above.

Instead of using the \textit{compositional} nature of the underlying category to define arrays as in the hom-functor case, we choose to use the \textit{monoidal product} nature of the underlying category to define arrays.
Monoidal products support finite amalgamations of objects and morphisms, and guarantee that these always exist.
For instance, the finite product of probability spaces $\Pi_{i \in I} X_i$ always generates a probability space, in a manner that does not apply to arbitrary hom-sets $\mathcal{C}(X,Y)$ as in this case of unmeasurable spaces such as $\mathcal{C}(\mathbb{R}, \mathbb{R})$.
The limitations which propagate from this approach then reflect the limitations of the underlying algebra. Independent probabilistic operations run in parallel have characteristics distinct from independent deterministic functions, for instance.
This subsection presents a generic \textbf{monoidal array category} which extends a generic monoidal category with array and lifting structure, which will then be extended to the expressiveness needed for representing deep learning models.

\subsubsection{Indexing Category}
For a monoidal array category, we must first define the template for an indexing category used to manipulate the stride and shape of arrays. When it comes to representing deep learning models, this finite category will be restricted to finite affine transformations to enable stride to be reliably manage. By setting up the template, we ensure that the algebra holds at a high-level and may be extended in the future.

\begin{definition}[Indexing Category]
    An indexing $\mathcal{I}$ category is a symmetric monoidal product category where we have;
    \begin{itemize}
    \item \textbf{Lone Objects} $P_i$ have a finite number of totally ordered \textbf{elements } $\El{P}=\mathcal{I}(\mathbbm{1}, P_i)$. Objects $P=\Pi_{i \in I} P_i$ are built from the flat associative product of lone objects, and their total order ranks sequential elements.
    The elements of the product of objects is $\operatorname{El}(P)=\{\Pi_{i \in I} p_i | \forall i \in I. p_i \in \El{P_i}\}$.
    \item \textbf{Morphisms } between objects $\eta: P \to Q$ are uniquely identified by their action on elements, so that if $p \compose \rho = p \compose \eta$ for all $p \in \El{P}$, then $\rho = \eta$ i.e. morphisms are uniquely identified by their mapping from input to output elements.
    \end{itemize}
\end{definition}

\subsubsection{Monoidal Array Category}
To extend a generic symmetric monoidal product category $\mathcal{C}$ to a parallelized monoidal array category $\lift{\mathcal{C}}{\mathcal{I}}$, we introduce a collection of synthetic array objects with shapes determined by an indexing category $\mathcal{I}$. These arrays objects are relabeling of the product of objects with themselves, so that $\lift{X}{P}$ for $X \in \operatorname{Ob} \mathcal{C}$ and $P \in \operatorname{Ob} \mathcal{I}$ corresponds to $\Pi_{p \in |\operatorname{El}| P} X$. In this perspective, arrays are simply reannotations of already existing objects, no \textit{new} mathematical structure is introduced. Rather, we develop a language and template to deal the common structure of parallelized operations, allowing this framework to be applied onto any symmetric monoidal category.

We will use a powerful tool from category theory called \textbf{canonical forms}~\citep{wilson_2024, joyal1991geometry}. In a category, objects $A$ and $A'$ are isomorphic if there exists a pair of inverse morphisms $\eta':A\to A'$ and $\eta: A' \to A$. Critically, this enables us to choose $A$ as the canonical form of the objects. Morphisms $f: B \to A'$ to $A'$ are replaced with $f \compose \eta: B \to A$, while morphisms $g: A' \to C$ are replaced with $\eta' \compose g: A \to C$. This preserves composition, as $f \compose g: B \to C$ becomes $f \compose \eta' \compose \eta \compose g = f\compose g$. Usefully, this process removes all mention of $A'$ from the category without loss of generality.
The distinction between isomorphic objects $A$ and $A'$ is, in essence, a mere symbolic tag to guide composition.

\begin{definition}[Isomorphism and Canonical Forms]
    In a category $\mathcal{C}$, objects $A$ and $A'$ are \textbf{isomorphic} if there exists two morphisms $\eta':A\to A'$ and $\eta: A' \to A$ so that $\eta \compose \eta'=\operatorname{Id}_{A'}$ and $\eta' \compose \eta=\operatorname{Id}_{A}$.

    Given isomorphic objects $A$ and $A'$ and chosen canonical isomorphisms $\eta:A'\to A$ and $\eta:A' \to A$, we can compress their expression into a canonical form yielding the new category $\mathcal{C} \operatorname{mod} \langle A', \eta, \eta' \rangle$ by using a functor (\textit{a composition preserving map}) $\operatorname{mod} \langle A', \eta, \eta' \rangle$ which maps objects $A' \mapsto A$, morphisms $f: B \to A'$ with $A'$ as the codomain to $f \compose \eta: B \to A$, and morphisms $g: A' \to C$ with $A'$ as the domain to $\eta' \compose g:A \to C$.
\end{definition}

We can then run the process in reverse to create \textbf{synthetic objects}. The involves taking a category $\mathcal{C}$ and introducing novel object symbols which are isomorphic to existing objects. The underlying meaning of morphisms in the new category $\mathcal{C}'$ can then be found by applying the synthetic canonical isomorphisms to yield an expression in $\mathcal{C}$. Nonetheless, $\mathcal{C}'$ differs in that composition is natively restricted -- ``decanonical'' objects cannot be composed without reduction -- and reveals the algebra of consistent structures which may not otherwise be apparent.
We can now introduce monoidal arrays \lift{X}{P} as synthetic objects which reexpress the parallel product of an underlying object with itself, introducing arrays by using the guaranteed mathematics of the underlying category.

\begin{definition}[Monoidal Array Category]
Given an underlying base symmetric monoidal product category $\mathcal{C}$ with a lone object form (i.e. product objects are decomposable into singular root objects) and an indexing category $\mathcal{I}$ which conforms to the allowed rearrangements of $\mathcal{C}$, we construct $\lift{\mathcal{C}}{\mathcal{I}}$ by introducing synthetic objects called \textbf{arrays} $\lift{X}{P}$ with $X$ taken from $\mathcal{C}$ and $P$ taken from $\mathcal{I}$.

Arrays $\lift{X}{P}$ are isomorphic to $\Pi_{p \in \cardEl{P}} X$ by a pair of isomorphisms called the \textbf{join} $\Jn{X}{P}:\Pi_{p \in \cardEl{P}} X \to \lift{X}{P}$ and \textbf{separator} $\Sp{X}{P}:\lift{X}{P} \to \Pi_{p \in \cardEl{P}} X$.

\end{definition}

With the language provided by the monoidal array category, we can introduce batch lifted morphisms to express parallelization and reindexing morphisms to manipulate array stride. Batch parallelized morphisms are akin to hom-functors, but use the synthetic array objects rather than hom-sets which may not be supported by the underlying category. Reindexing morphisms indicate stride manipulations -- such as diagonalizing an axis or performing a convolution -- by borrowing the rearrangement structure of the category. As the indexing category $\mathcal{I}$ is defined to be totally ordered within the elements of each of its objects, it can instruct the construction of an ordered, monoidal product in the underlying category.

\begin{definition}[Batch Lifting and Reindexing Morphisms]
    Given a monoidal array category $\lift{\mathcal{C}}{\mathcal{I}}$, we construct;
    \begin{itemize}
    \item \textbf{Batch Lift} The batch lift of a morphism $f: X \to Y \in \operatorname{Mo}(\lift{\mathcal{C}}{\mathcal{I}})$ by $P \in \operatorname{Ob}(\mathcal{I})$ is given by $\lift{f,P}: \lift{X}{P} \to \lift{Y}{P}$ so that;
    \begin{equation*}
        [f,P]=\Sp{X}{P}\compose \prod_{p \in \cardEl{P}} f \compose \Jn{Y}{P}
    \end{equation*}
    \item \textbf{Reindexing Morphism} Given a morphism $\eta:P\to Q$ of $\mathcal{I}$ and an object $X \in \lift{\mathcal{C}}{\mathcal{I}}$, we define the reindexing morphism $\lift{X}{\eta}:[X,Q]\to\lift{X}{P}$ to give;
    \begin{equation*}
        \lift{X}{\eta}=\Sp{X}{Q} \compose \rng{\eta}{(X)_{q \in \cardEl{Q}}} \compose \Jn{X}{P}
    \end{equation*}
    \end{itemize}
\end{definition}

Notably, both of these definitions are compositional, and in certain cases are dually so, passing through each other. This resembles the Yoneda lemma, but takes into account the underlying algebraic structure of the category. This is shown in Appendix~\ref{sec:yoneda-new}.

\begin{lemma}[Compositional Properties of the Array Monoidal Category]
    In an array monoidal category, we have $\lift{f \compose g}{P}=\lift{f}{P} \compose \lift{g}{P}$ along with $\lift{X}{\rho \compose \eta}=\lift{X}{\eta} \compose \lift{X}{\rho}$. When either $\eta: P \to Q$ is natural within the underlying category or $f: X \to Y$ is \textit{deterministic}, obeying naturality for all rearrangements, then we have $\lift{f}{Q}\compose \lift{Y}{\eta}=\lift{X}{\eta} \compose \lift{f}{P}$.
\end{lemma}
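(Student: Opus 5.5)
The plan is to unfold every lifted morphism into its definition via $\Sp{}{}$ and $\Jn{}{}$, use that these are mutually inverse, and reduce each identity to a statement about products and rearrangements in the base category $\mathcal{C}$. Throughout we rely on the axioms of Definition~\ref{def:monoidal-category} and the rearrangement calculus (composition and naturality of remappings).

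\textbf{Functoriality of batch lifting.} Write
\[
\lift{f}{P}\compose\lift{g}{P}=\Sp{X}{P}\compose\Big(\prod_{p\in\cardEl{P}} f\Big)\compose\Jn{Y}{P}\compose\Sp{Y}{P}\compose\Big(\prod_{p\in\cardEl{P}} g\Big)\compose\Jn{Z}{P}.
\]
Since $\Jn{Y}{P}\compose\Sp{Y}{P}=\operatorname{Id}$, the middle pair cancels. Bifunctoriality, applied inductively over the finite index set $\cardEl{P}$, then gives $\prod f\compose\prod g=\prod(f\compose g)$, which is the claim.

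\textbf{Contravariance of reindexing.} For $\rho:P\to Q$ and $\eta:Q\to R$, expand $\lift{X}{\eta}\compose\lift{X}{\rho}$. The inner $\Jn{}{}\compose\Sp{}{}$ at $Q$ cancels, leaving two rearrangements in sequence. The rearrangement composition rule $[\mu]\compose[\nu]=[\nu\compose\mu]$ merges them. One must check that the remapping induced by the $\mathcal{I}$-morphism $\rho\compose\eta$ on elements equals the corresponding composite of element maps. This holds because morphisms of $\mathcal{I}$ are determined by their action on elements and composition of elements is $p\mapsto p\compose\rho\compose\eta$. The direction reversal is exactly the one built into the rearrangement rule, since remappings point from outputs back to inputs.

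\textbf{Interchange law.} Expand both sides:
\begin{align*}
\lift{f}{Q}\compose\lift{Y}{\eta}&=\Sp{X}{Q}\compose\Big(\prod_{q}f\Big)\compose\rng{\eta}{(Y)_q}\compose\Jn{Y}{P},\\
\lift{X}{\eta}\compose\lift{f}{P}&=\Sp{X}{Q}\compose\rng{\eta}{(X)_q}\compose\Big(\prod_{p}f\Big)\compose\Jn{Y}{P}.
\end{align*}
Again the inner join and separator cancel. Stripping the outer isomorphisms, it suffices to show
\[
\Big(\prod_{q}f\Big)\compose\rng{\eta}{(Y)_q}=\rng{\eta}{(X)_q}\compose\Big(\prod_{p}f\Big).
\]
This is precisely the naturality clause in the definition of rearrangements, applied with the constant family $f_q=f$, so that $f_{\eta(p)}=f$. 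It holds whenever the remapping of $\eta$ is natural in $\mathcal{C}$. Alternatively it holds when $f$ is deterministic, meaning $f$ commutes with every rearrangement, in particular with $\rng{\eta}{}$.

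\textbf{Expected obstacle.} The main difficulty is bookkeeping rather than substance. One must correctly identify the finite function $\eta$ induces on $\cardEl{}$ (via the total order on elements) and match it with the rearrangement indexing convention $\mu:J\to I$. This matters both for getting the contravariant order in the second identity and for the instance of naturality in the third. I would fix this convention once, in a short preliminary remark, and then cite it in all three steps.
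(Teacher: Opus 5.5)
Your proposal is correct and follows the paper's own proof (Appendix~\ref{sec:yoneda-new}) essentially step for step: you unfold each lifted morphism through the separator and join and cancel the inner join--separator pair. You then invoke, respectively, bifunctoriality, the rearrangement composition rule, and the naturality clause applied to the constant family $f$ (or determinism of $f$), exactly as the paper does. Your tracking of the intermediate objects, e.g.\ cancelling $\Jn{Y}{P}\compose\Sp{Y}{P}$ and ending with $\Jn{Z}{P}$, is actually more careful than the paper's displayed computation.
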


For the purposes of diagrams and code implementation, we will use standard canonical forms of the lifted constructs. Much like flattening objects provides easier representation without loss of generality, these canonical forms of lifted constructs avoids mixing products in the base and lift of expressions.
Note that the pseudocode here is to show correspondence to construction rules, and during implementation take a different form.

\begin{itemize}

\item
    \textbf{Array Objects.} Objects are drawn as an arrowed base object, with wires placed above to indicate lifting. This employs the canonical form $\lift{\lift{X}{P}}{Q} \mapsto \lift{X}{Q\otimes P}$ so that each new wire is placed at the top, and $X \mapsto \lift{X}{\mathbbm{1}}$ (no wire) for consistency.
    
    \noindent
    \begin{minipage}{0.35\textwidth}
    \Midline{3Array}{arraymonoidal0_objects}
    \end{minipage}
    \begin{minipage}{0.55\textwidth}
    
    \begin{lstlisting}[language=Python]
class Array[B, A]:
    base: B
    shape: ProdObject[A]
def object_lift(base: Array[B, A], lift: ProdObject[A]):
    return Array(base.base, lift + base.shape)
    \end{lstlisting}
    
    \end{minipage}

\item \textbf{Products.} As unmarked vertical stacking corresponds to lifting, products must be distinguished by a dashed wire. Furthermore, to avoid mixing products, we enforce the rule that $\lift{\Pi_{i\in I} X_i}{P} \mapsto \Pi_{i \in I} \lift{X_i}{P}$. This requires the underlying category $\mathcal{C}$ to be symmetric.

\noindent
\Midline{3Array}{arraymonoidal1_product}

\item
    \textbf{Batch Lift.} The batch lift of morphisms is indicated by placing a wire over an expression. This properly adjusts the domain and codomain shapes. In diagrams, batch lifting composition is axiomatically enforced.
    
    \noindent
    \begin{minipage}{0.35\textwidth}
    \Midline{3Array}{arraymonoidal2_batchlift}
    \end{minipage}
    \begin{minipage}{0.55\textwidth}
    
    \begin{lstlisting}[language=Python]
class BatchLift[B, A, M: Morphism[Array[B,A]]](Morphism[Array[B,A]]):
  base: M
  lift: ProdObject[A]
  def dom() -> object_lift(base.dom(), lift)
  def cod() -> object_lift(base.cod(), lift)
    \end{lstlisting}
    
    \end{minipage}

\item
    \textbf{Reindexing.} Reindexings are placed on the relevant wires, distinguished from base morphisms by being drawn as hexagons. Note how the order of the domain and codomain reverse. In diagrams, reindexing composition is axiomatically enforced.
    
    \noindent
    \begin{minipage}{0.35\textwidth}
    \Midline{3Array}{arraymonoidal3_reindexing}
    \end{minipage}
    \begin{minipage}{0.55\textwidth}
    
    \begin{lstlisting}[language=Python]
class Reindexing[B, A, S: Morphism[A]](Morphism[Array[B,A]]):
  base: B
  stride: S
  def dom(self) -> Array(base, stride.cod())
  def cod(self) -> Array(base, stride.dom())
    \end{lstlisting}
    
    \end{minipage}

\end{itemize}

Diagrams allow for the combined compositionality of batch lifting and reindexing to be shown diagrammatically as a sliding operation as below if $f$ and $\eta$ are mutually natural;

\Midline{3Array}{arraymonoidal4_yonedasliding}

Similarly to a the \texttt{ProductCategory} construct, this yields a recursive code implementations of a generic array monoidal category.

    \begin{lstlisting}[language=Python]
type ArrayProductCategory[B, A, M: Morphism[Array[B,A]], S: Morphism[A]] = (
  ProductCategory[
    Array[B,A], 
    M | Reindexing[B,A,S] | BatchLift[B,A,M]
  ])
    \end{lstlisting}

\subsubsection{Weaves and Integrated Reindexings} \label{sec:broadcasting}
The outline above provides the full mathematical basis for representing deep learning models. Already, we can generate deep learning models by setting the indexing category $\mathcal{I}$ to be affine stride transformations with axes (and their product) as objects and the underlying category $\mathcal{C}$ to represent datatypes and morphisms between them.
These morphisms can be deterministic functions or probabilistic Markov kernels, the foundational template is the same.
Arrays, parallelization, and stride transformations are then imposed using the template above.
A graph representation follows from us working with a symmetric monoidal category~\citep{piedeleu2025string}, and this graph representation can consider arrays as whole objects rather than a mere concatenation of individual values.
We can then layer the $\mathbf{Para}$ construct over the array monoidal representation to consider parameterized learned layers and backpropagation.

However, we want to introduce one more layer of features to better model how deep learning architectures are used in practice.
With the tools provided so far, an operation targeting the top axis may be represented as below. On the right, we show the expression without the base object/underlying datatype, as it is often secondary to understanding an architecture.

\begin{minipage}{0.5\textwidth}
\Midline{3Array}{arraymonoidal5_rowwise0}
\end{minipage}
\begin{minipage}{0.5\textwidth}
\Midline{3Array}{arraymonoidal5_rowwise0_datatypefree}
\end{minipage}

Though mathematically clear, this misses important details. 
In practice, the operation will be computed over the top axis without necessarily executing a stride transformation, which may incur computational cost.
Computing matrix multiplication using tensor cores may be sensitive to the shape of inputs in a way that is not captured if indexing information is moved outside the core morphism.
Additionally, these diagrams and representations lack conciseness. Instead, we wish to represent this operation as in the form below, integrating these local manipulations into the expression itself.

\begin{minipage}{0.5\textwidth}
\Midline{3Array}{arraymonoidal5_rowwise1}
\end{minipage}
\begin{minipage}{0.5\textwidth}
\Midline{3Array}{arraymonoidal5_rowwise1_datatypefree}
\end{minipage}

To integrate indexing information into the morphism itself, we use \textbf{weaves}. Weaves indicate which of the indexing objects form part of the \textbf{tile} or \textbf{target} of an expression. This encodes a symmetric remapping.
We incorporate weaves for the codomain segments and weaves accompanied by reindexings for the domain segments.
This avoids the problem of indexing information being encoded separately from the operation itself, allowing our representation to show architectures in a concise and computationally meaningful manner.
This is shown in Figure~\ref{fig:arraymonoidal6_domcodweaves}, Listing~\ref{lst:weave}, and Defintion~\ref{def:weave}.

\DiagramMacroR{3Array}{arraymonoidal6_domcodweaves}{
An example of a broadcasted operation. Weaves use tags to rearrange data. Starting from the right, codomain weaves organize index objects into the degree, which is used to lift an underlying operation. For each input domain segment, a reindexing is applied over the degree. The result is then organized using the domain weaves. Typically, base objects are of secondary importance. However, structurally significant base objects may be highlighted as in the case of $n$.
}

\begin{lstfloat}
\begin{lstlisting}[language=Python]
class Operator: ...
class Weave[B, A]:
  base: B
  shape: Prod[A | TILED]
  def impose(tiling: ProductObject[A]):
    return Array(base, ProductObject(
      next(tiling) if a == TILED else a
      for a in shape))
class Broadcasted[B, A, S: Morphism[A]](Morphism[Array[B,A]]):
  target_operation: Operator
  dom_weaves: Prod[Weave[B, A]]
  cod_weaves: Prod[Weave[B, A]]
  reindexings: Prod[ProductCategory[S]]
  def dom(): return ProductObject(
    weave_i.impose(reindexing_i.cod)
    for weave_i, reindexing_i in zip(dom_weaves, reindexings)
  )
  def degree(): return allequals(
    reindexing_i.dom for reindexing_i in reindexings_i  
  )
  def cod(): return ProductObject(
    weave_j.impose(degree())
    for weave_j in cod_weaves
  )
\end{lstlisting}
\label{lst:weave}
\caption{
    To avoid layered broadcasts, we use a generic operator tag to indicate the underlying operation. Weaves serve the role of indicating the target arrays.
}
\end{lstfloat}

\begin{definition}[Weaves and Broadcasted Operations]~\label{def:weave}
A \textbf{weave} $w$ over an indexing object $\Pi_{\ell\in L} P_\ell$ in a symmetric indexing category $\mathcal{I}$ tags each of $\ell \in L$ inputs to be sent to the front (tiled) or back (target). The front $\alpha_w: L \to F_w$ remapping extracts front objects, and the back remapping $\beta_w: L \to B_w$ extracts the back objects. Together, $\sigma_w=\rng{0,0}{L} \compose (\alpha_w \otimes \beta_w)$ forms a symmetric map.

A broadcasted operation in $\lift{\mathcal{C}}{\mathcal{I}}$ has;
\begin{itemize}
\item A \textbf{target morphism} $f:\Pi_{i \in I} \lift{X_i}{A_i} \to \Pi_{j \in J}\lift{Y_j}{B_j}$,
\item A \textbf{degree} $P \in \operatorname{Ob}(\mathcal{I})$,
\item A $J$-family of \textbf{codomain weaves} $(c_j)_{j \in J}$, each of which has $F_{c_j}=\length{P}$ and $B_{c_j} = \length{B_j}$,
\item An $I$-family of \textbf{reindexing} morphisms $(\eta_i)_{i \in I}$ from $\operatorname{Mo}(\mathcal{I})$ where $\eta_i : P \rightarrow Q_i$,
\item And an $I$-family of \textbf{domain weaves} $(d_i)_{i\in I}$, each of which has $F_{d_i}=\length{Q_i}$ and $B_{d_i}=\length{A_i}$.
\end{itemize}
This then forms an operation $B$ mathematically equivalent (though may vary computationally from) (see Section~\ref{sec:imposing_structure});
\[
    B: \prod_{i \in I}\lift{X_i}{\cod{\rng{\sigma_{dj}}{(Q_i, A_i)}}} 
    \to
    \prod_{j \in J}\lift{Y_j}{\cod{\rng{\sigma_{cj}}{(P, B_j)}}} 
\]
\[
    B = \left(\prod_{i \in I} 
        \lift{X_i}{(\eta_i \otimes \Id{T_i}) \compose
        \rng{\sigma^{-1}_{di}}{(Q_i, A_i)}
        }\right)
    \compose
    \lift{f}{P}
    \compose
    \left(\prod_{j \in J}
    \lift{Y_j}{\rng{\sigma_{cj}}{\cod{\rng{\sigma_{cj}}{(P, B_j)}}}}
    \right)
\]

\end{definition}

\subsection{Imposing Structure -- The $\lift{\mathcal{B}}{\mathcal{A}}$ Category} \label{sec:imposing_structure}
We now use the template above to define a mathematical description of deep learning models.
This will be done by multiple rounds of layering templates on top of each other, yielding a rich description with each level of abstraction sharing key structural properties.
These layers can be adjusted in the future to accommodate features outside of the scope of this work, such as considering differentiability.

We define $\mathcal{B}$ to be \textbf{BorelStoch}~\citep{fritz2023markov}, the category of standard Borel spaces and measurable Markov kernels. Borel spaces correspond to the datatypes we need, real numbers, complex values, or finite integers, equipped with a notion of probability. In \textbf{BorelStoch}, morphisms are Markov kernels, with each $f:X \to Y$ propagating elements of $X$ to probability distributions $P_f(y|x)$ over $Y$. 
Measurable deterministic functions are realized as generating Dirac distributions $P_f(y|x)=\delta_{f(x)}(y)$
The product is the product of $\sigma$-algebras, places morphisms in independent parallel, and supports naturally symmetric swapping.

\begin{definition}[Base Category -- \textbf{BorelStoch}]
The base category $\mathcal{B}$ is \textbf{BorelStoch}, a symmetric monoidal category with;
\begin{itemize}
    \item \textbf{Objects.} Each $X \in \Ob{\mathcal{B}}$ correspond to standard Borel spaces $(X,\Sigma_X)$. Objects are decomposable into lone objects, whose $\sigma$-algebras are isomorphic to $\mathbb{R}$, $\mathbb{Z}$, or a finite discrete set $n \in \mathbb{N}$.
    \item \textbf{Morphisms.} Morphisms $f:X \to Y$ 
    correspond to Markov kernels, mapping each
    $x$ in the set $X$ to a probability distribution $P_{f} (y|x)$ over $P(Y)$ in a measurable manner.
    Composition is given by chaining Markov kernels,
    \begin{equation*}
    P_{(f \compose g)}(z|x) = \int_Y P_g(z|y). dP_f(y|x)
    \end{equation*}
    \item \textbf{Products.} The product of objects $X \otimes Y$ corresponds to the Borel $\Sigma_{X \times Y}$ space over $X \times Y$ which contains all $\sigma_X \times \sigma_Y$ for $\sigma_X \in \Sigma_X$ and $\sigma_X \in \Sigma_Y$. The product of morphisms implements their kernels in independent parallel.
\end{itemize}
\end{definition}

In implementations, we do not define these exact maps. Rather, the operator tags of broadcasted constructs (see Section~\ref{sec:broadcasting}) indicate this operation. The aim of our implementation is to represent the structure of architectures. Evaluation can be viewed as a non-core property (see Section~\ref{sec:construction-rules}). Operations can be equipped with metadata which allows correct evaluation to be deduced and utilized by packages such as PyTorch or low-level non-structural mathematical analysis.

The base category is extended to arrays and parallelized operations with an indexing category $\mathcal{A}$ which has axes as objects and finite, affine transformations as morphisms. This class of operations supports standard stride transformations, axis rearrangements, and can be composed in a reliable manner. This restriction may be loosened, but this would make it difficult to track how axes interact with each other.

\begin{definition}[Indexing Category -- Finite Affine Transformations]
The indexing category is supplied by the category $\mathcal{A}$ of axes and finite affine transformations. It has;
\begin{itemize}
    \item \textbf{Objects.} The lone objects $P_i$ of the category are \textbf{axes}, with elements $\El{P_i}$ corresponding to natural numbers $\{0, 1, \dots, \card{P_i} - 1\}$. Objects are built from the product of these affine spaces, and correspond to shapes.
    \item \textbf{Morphisms.} Morphisms $\eta: P \to Q$ are finite affine transformations characterized by a linear map $\Lambda_\eta \in \mathbb{N}^{\length{P} \times \length{Q}}$ and an offset $v_\eta \in \mathbb{N}^Q$ so that elements $\textbf{x} \in P$ are mapped according to;
    \begin{equation*}
        \eta(\textbf{x}) = \Lambda_\eta \cdot \textbf{x} + v_\eta
    \end{equation*}
    We impose the restriction that all these elements must be contained within $Q$. Composition is then given by;
    \begin{equation*}
        \Lambda_{\eta \compose \rho} = \Lambda_\rho \cdot \Lambda_\eta
        \qquad
        v_{\eta \compose \rho} = v_\rho + \Lambda_\rho \cdot v_\eta
    \end{equation*}
\end{itemize}
\end{definition}

We can now use the lifted array monoidal category $\lift{\mathcal{B}}{\mathcal{A}}$ to characterize the mathematics of deep learning architectures. This framework provides a language for describing probabilistic operations, parallelized maps, and demarcated arrays.
This provides useful insight.
For example, parallelizing a random operation such as dropout is defined to represent the operation performed in parallel with distinct random behaviour across each array index. Critically, this approach allows us to configure the size of axes and use polymorphic functions. As covered in Section~\ref{sec:uid}, our implementation allows axes sizes to be pending terms. Therefore, we can generate expressions organized along demarcated arrays with pending axes sizes which can be assigned a static value or left open to support variable size inputs.

Next, we use $\textbf{Para}$~\citep{fong2019backprop} to work with paramaterized operations. $\textbf{Para}$ acts as a wrapper around a symmetric monoidal category, generating $\textbf{Para}(\lift{\mathcal{B}}{\mathcal{A}})$ which modifies the interpretation of composition and morphisms to ``hide'' inputs.

\begin{definition}[$\textbf{Para}$]
Given a symmetric monoidal category $\mathcal{C}$, the symmetric monoidal category $\textbf{Para}(\mathcal{C})$ has;
\begin{itemize}
    \item \textbf{Objects.} The objects of $\textbf{Para}(\mathcal{C})$ are the same as those of $\mathcal{C}$.
    \item \textbf{Morphisms.} A morphism $\langle \theta, f \rangle: X \to Y$ in $\textbf{Para}(\mathcal{C})$ corresponds to an object $\theta$ and morphism $f: \theta \otimes X \to Y$ in $\mathcal{C}$ so that;
    \begin{equation*}
        \langle \theta, f\rangle \compose \langle \varphi, g \rangle = \langle \varphi \otimes \theta, (\Id{\varphi} \otimes f) \compose g \rangle
    \end{equation*}
    In this manner, $\textbf{Para}(\mathcal{C})$ ``hides'' parameterized objects while maintaining composition. Hidden objects are accumulated into a tape, preventing them from disturbing desired compositional structure.
\end{itemize}
\end{definition}

By working with $\textbf{Para}(\lift{\mathcal{B}}{\mathcal{A}})$ as our core mathematical definition, operations such as learned linear layers can be used without their additional weight inputs interfering with composition. It also leaves open the ability to extend our representation to natively consider differentiation and backpropagation by modifying $\mathcal{B}$ to support differentiation, though this is outside the scope of this work.

This core mathematical definition serves as a template for architectures.
Within an expression, each constituent operation, rearrangement, block, product and composition corresponds to a precise mathematical map from inputs to outputs.
Finally, we layer on an algorithmic interpretation, $\textbf{Alg}$. An algorithmic expression is open to further structure under the requirement that each component maps to a mathematical operation via a structure-preserving map (a functor) $F: \textbf{Alg} \to \textbf{Para}(\lift{\mathcal{B}}{\mathcal{A}})$.
The algorithmic expression, then, supports notions such as stride transforms being computationally costly, or datatypes being implemented at some level of quantization.
This restricts the algebra of expressions -- introducing unnecessary rearrangements such as $\lift{X}{\rng{1,0}{(Q,P)}} \compose \lift{X}{\rng{1,0}{(P,Q)}}$ cannot be reduced to $\Id{\lift{X}{P\otimes Q}}$, meaning in $\textbf{Alg}$ we may have terms $f_0 \neq f_1$ even if $F(f_0) = F(f_1)$.
This is critical, however, as optimizing implementation details lies in exploring mathematically-equivalent yet algorithmically-distinct expressions, which we may indicate as $f_0 \equiv f_1$.
Furthermore, working in the $\textbf{Alg}$ abstraction layer allows us to distinguish between pure product operations $f \otimes f$ and parallelized operations $\lift{f}{P}$. As arrays may be dynamically sized and provide guarantees related to parallelized computation, treating these two expressions distinctly allows for architectural and computational analysis.

The exact details of what constitutes $\textbf{Alg}$ and how this can be developed to create a model of deep learning computation and optimization is outside the scope of this work.
For example, it may involve implementing quantized datatypes by coupling an underlying datatype $\mathbb{R}$ with a tag $q$ indicating its quantization.
The behaviour of this quantized datatype can then be analyzed, while mathematically interpreted as $\mathbb{R}$. This allows quantization information to be considered, while still preserving the notion that an operation executed at two distinct quantization levels is in some sense ``equivalent''.
For now, it is sufficient to consider that the diagrammatic and code expressions of architectures generated by our framework correspond in ``reality'' to this additional layer over core mathematical definitions.

\FloatBarrier

\section{Key Operations}\label{sec:keyoperations}

Now, we finally move onto diagramming and representing deep learning architectures, culminating in representing DeepSeek-V3~\citep{deepseek-ai_deepseek-v3_2025}.
All the mathematics and structure laid out above allows the definition of integrated broadcasted operations of Section~\ref{sec:broadcasting} to serve as the basis for describing the components used by deep learning models by generating expressions in $\textbf{Alg}$ tied to mathematical expressions in $\textbf{Para}(\lift{\mathcal{B}}{\mathcal{A}})$.
To represent various operations, we need to define operator tags.
Typically, operators will be diagrammed by some pictogram, allowing for instance an expanding triangle for SoftMax. These typically correspond to polymorphic operations which may defined over axes of various sizes -- such as SoftMax taking $\mathbb{R}^n \to \mathbb{R}^n$ defined over any $n\in \mathbb{N}$ -- which can be configured according to the UID logic (see Section~\ref{sec:uid}). These are then broadcasted using the weaving methods set up previously.

\DiagramMacroR{3StrideBroadcasted}{keyoperations0_broadcastedpictogram}{
    SoftMax over the second-last dimension of an array can be shown with rearranging wires. The weaving of the operation indicates that it supplies $\displaystyle F(\mathbf{x})[ p_{0} ,:,p_{1}] =\text{SoftMax}(\mathbf{x}[ p_{0} ,:,p_{1}])$.
}

Indeed, when the underlying category supports natural deletion, as in the case of $\lift{\mathcal{B}}{\mathcal{A}}$ based on $\textbf{BorelStoch}$, then the broadcasting process corresponds to translating slices over the broadcasted axis.
Slices correspond to Pythonic operations $[i, :]$, extracting specific rows or columns of an array.
These map to reindexings generated by $\lift{X}{i \otimes \Id{Q}}: \lift{X}{PQ} \to \lift{X}{Q}$ for $i \in \El{P}$.
This translation process is shown below in Figure~\ref{fig:addops00_broadcasted_op}.

\DiagramMacroR{3StrideBroadcasted}{addops00_broadcasted_op}{
    SoftMax over the second-last dimension of an array can be shown with rearranging wires. The weaving of the operation indicates that it supplies $\displaystyle F(\mathbf{x})[ p_{0} ,:,p_{1}] =\text{SoftMax}(\mathbf{x}[ p_{0} ,:,p_{1}])$.
}

\FloatBarrier

\FloatBarrier

\subsection{Einstein Operations}

Einstein operations are those which can be readily shown with the Einstein summation convention, and include transposes, summations, outer products, and inner products~\citep{rogozhnikov2022einops}. In the simplest case, we have an identity operation raised by a rearrangement, generating transposes, diagonalizations, and repetitions. The logic of these rearrangements can be seen through slice sliding, indicating clearly how inputs relate to inputs. These forms are diagrammed in Figures \ref{fig:addops01_diagonalization} and \ref{fig:addops03_repetition}.

\DiagramMacroCtwo{3StrideBroadcasted}
    {addops01_diagonalization}
    {
    Diagonalization corresponds to the equation $\mathbf{y}[ p,:] =\mathbf{x}[ p,p,:]$. This can be expressed using the rearrangement reindexing $p\mapsto ( p,p)$.
}{addops02_diagonalization_grid}

\DiagramMacroCtwo{3StrideBroadcasted}
    {addops03_repetition}
    {
Repetition corresponds to the equation $\displaystyle \mathbf{y}[ p,:] =\mathbf{x}[ :]$. This can be expressed using the rearrangement reindexing $\displaystyle p\mapsto ()$.
}{addops04_repetition_grid}

Given the significance of matrix multiplication and summation operations, these also utilize the generic ``pictogram-free'' representation.
A dashed line terminating implicitly yields a multiplication, $\mathbb{R} \otimes \mathbb{R} \to \mathbb{R}$. In Figure~\ref{fig:addops05_multiplication}, the reindexings for the inputs are independent operations, and thereby we recover an outer product.
Summation is indicated by a terminating wire or, if preceded by diagaonlization, a curved cup.
The curved cup mimics Penrose graphical notation and provides visual distinctiveness to the critical dot product operation. This is shown in Figure~\ref{fig:addops06_dotproduct}.
Thereby, the linear rearrangement logic of Penrose graphical notation or tensor networks ~\citep{biamonte2019tensor} is recreated within our framework.

\DiagramMacroR{3StrideBroadcasted}{addops05_multiplication}{
Multiplication can be expresed by having a dashed wire come to an end. The underlying operation is $\displaystyle ( \cdot ) :\mathbb{R} \times \mathbb{R}\rightarrow \mathbb{R}$, and therefore the target arrays $\displaystyle [\mathbb{R} ,\mathbbm{1}]$ have no axes. Here, with degree $\displaystyle P=P_{0} P_{1} P_{2}$, our reindexings are the rearrangements $\displaystyle p_{0} p_{1} p_{2} \mapsto p_{0} p_{1}$ and $\displaystyle p_{0} p_{1} p_{2} \mapsto p_{2}$ respectively. This means the operation provides $\displaystyle \mathbf{z}[ p_{0} ,p_{1} ,p_{2}] =\mathbf{x}[ p_{0} ,p_{1}] \cdot \mathbf{y}[ p_{2}]$, or an \textbf{outer product}.
}

\DiagramMacroR{3StrideBroadcasted}{addops06_dotproduct}{
Weaved multiplication followed by summation yields an Einstein operation. Here, we show the operation \texttt{'q h d, x h d → h q x'} which forms the query-key multiplication of multi-head attention. Note how the parallel calculation of heads is expressed by broadcasting.
}

\FloatBarrier

\subsection{Dropout -- Randomness and Elementwise Operations}

We can test the implications of $\textbf{BorelStoch}$ as the underlying base category by assessing how dropout would be represented within our framework.
Dropout is realized as a Markov kernel on elements, mapping $x \mapsto P_{\lightning(r)} (x)$ which assigns a measure of $r$ to $\sigma_X \in \Sigma_X$ which contains $x$ and $0$ to all Borel subsets which do not.
Dropouts $\lightning(r): X \to X$ are fundamentally elementwise operations, which are broadcasted to forms such as $\lift{\lightning(r)}{P}:\lift{X}{P}\to \lift{X}{P}$.
Using the slice sliding formulation, these correspond to $\lift{\lightning(r)}{P} \compose \lift{X}{i} = \lift{X}{i} \compose \lightning(r)$, meaning each output result is evaluated independently.
Therefore, all axes form part of the tiling rather than the target. If the datatype is implicit, corresponding to $\mathbb{R}$, the operation then ``hovers'' over an expression as shown in Figure \ref{fig:keyoperations3_dropout}.

\DiagramMacroR{3StrideBroadcasted}{keyoperations3_dropout}{
Dropout expresses a probabilistic Markov kernel. The base operation maps $\displaystyle \lightning :\mathbb{R}\rightarrow \mathbb{R}$ which is broadcasted over the appropriate axes. Thus, the resulting operation ``hovers'' in the diagram. This corresponds to an action on each element, as the $\displaystyle \lift{\mathbb{R}}{pq}$ output is determined by $\displaystyle \lightning $ acting on the $\displaystyle \lift{\mathbb{R} }{pq}$ input. The mathematical meaning of this operation and its broadcasting behaviour is determined by $\displaystyle \lift{\mathcal{B} }{\mathcal{A}}$.
}

As our Yoneda sliding algebra only applies when either the underlying operation is deterministic (i.e. is natural with respect to all rearrangements) or reindexing is natural in the base category, repetitions do not slide over dropouts.
Repetitions are generated by reindexings mapping $p \mapsto 0$, and therefore correspond to count increases with $0$ being generated by every input.
If we take the dropout over an axis then repeat that axis, it yields a distinct result to repeating that axis and applying droupout as diagrammed in Figure \ref{fig:keyoperations4_dropout_sliding}.
This shows how the careful algebraic setup of Section~\ref{sec:category} allows for a more accuarate understanding of the underlying algebra of expressions.

\DiagramMacroR{3StrideBroadcasted}{keyoperations4_dropout_sliding}{
As dropout is non-deterministic -- copying the result is not equivalent to copying the input and running the function in parallel -- and repetition is non-natural, index sliding does not hold. This reflects the fact that repeating an axis full of generated random values is not the same as running the random operation in parallel across those axes.
}

\FloatBarrier
\subsection{Learned Operations}

Learned operations use stored weights to learn patterns within data. They can be categorically described using the \textbf{Para} construct. To indicate that an operation has a learned component, we bold some aspect of it. Diagramming linear layers and norming operations is shown in Figures \ref{fig:addops07_linear} and \ref{fig:addops08_norm}. Note that our framework allows multidimensional learned layers to be clearly expressed, in contrast to PyTorch where multidimensional linear operations often require exogenous stride manipulation to properly manage.

\DiagramMacroR{3StrideBroadcasted}{addops07_linear}{
A learned linear operation can be expressed by a chipped rectangle labeled $\displaystyle \mathbf{L}$. The number and size of input/output axes is appropriately displayed according to the broadcasting framework. Here, we show the shape of the final heads aggregator layer of multi-head attention, with $\displaystyle \overline{x}$-indicating the number of tokens.
}

\DiagramMacroR{3StrideBroadcasted}{keyoperations1_hidden}{Under the $\displaystyle \mathbf{Para}$ framework, learned layers express a hidden input. We take an expression $\displaystyle \lift{\mathbb{R} }{\overline{x} hd} \otimes \lift{\mathbb{R} }{dhm}\rightarrow \lift{\mathbb{R}}{\overline{x} m}$, broadcasted over $\displaystyle \overline{x}$, and hide the second input. This allows composition to act as expected, and the series of input weights to be accumulated in a tape.}

\DiagramMacroR{3StrideBroadcasted}{addops08_norm}{
Norms are expressed as a bold (\textit{learned}) circle with the inner space being occupied by a pictogram. The circle aims to be evocative of the idea of fitting values along a certain size. The ``V'' indicates RMSNorm.
}

\subsection{Explicit Datatypes -- Embeddings}

Embeddings are noteworthy in that they are operations $\displaystyle \mathbf{E} :V\rightarrow [\mathbb{R} ,m]$, where $\displaystyle V\in \mathbb{N}$ is the size of the vocabulary and $\displaystyle m \in \mathbb{N}$ is the token size. Therefore, the underlying datatype of their input array is an integer, and not a continuous value. 
This underlying datatype is then explicitly shown as an arrowed wire in diagrams.
Given a $\displaystyle V$-sized datatype and a $\displaystyle V$-sized axis, we override the cupping notation to indicate the selection operation $\displaystyle V\times \lift{\_}{V}\rightarrow \_$. This allows us to show the internals of embedding as a selection operation in Figure \ref{fig:addops09_embedding}.

\DiagramMacroR{3StrideBroadcasted}{addops09_embedding}{
The underlying operation of an embedding has shape $\displaystyle V\rightarrow [\mathbb{R} ,m]$, where $\displaystyle V\in \mathbb{N}$. If we have made the real datatype implicit, then we need to explicitly label $\displaystyle V$ with an arrow. We can use selection to express the internals of embedding.
}

\subsection{Mixture-of-Experts} \label{sec:moe}

This also provides the infrastructure to deal with Mixture-of-Experts.
In a Mixture-of-Experts, we have linear layers for each of $n$ experts, realized as an additional $n$-input similar to embeddings.
A specific token is inferenced by passing through a Top-K operation, which generates $k$ weights for the top values, and an array indicating the chosen expert indexes $\lift{n}{k}$.
Each token evaluates its $k$ experts in parallel,
at the end summing over the normalized top-k values.
Using our framework, we can show the algebra of the GeGLU feed-forward experts used in DeepSeek-V3~\citep{deepseek-ai_deepseek-v3_2025} as in Figure~\ref{fig:keyoperations2_moe}.
This can be understood as a step-by-step evaluation closely matching raw, Pythonic code.

\DiagramMacroMedium{3StrideBroadcasted}{keyoperations2_moe}{
Here, we show a GeGLU mixture of experts as used in DeepSeek-V3~\citep{deepseek-ai_deepseek-v3_2025}. We use a Top-K operation to select the indexes of the relevant arrays. The final output is then computed by summing over the normalized top-k weights, aligning with the chosen indexes. 
}

Furthermore, we can use a mathematically simplified expression. In this form, we rearrange the $k$-index selection to occur in the outputs of linear layers. This leads to a cleaner, mathematically equivalent expressions, sans the ``sparseness'' of the axis.
These ``sparse'' axes are introduced as a new axes types which references the local size at evaluation, $k$, relative to the total size, $n$, from which they are drawn.
This is shown in Figure~\ref{fig:keyoperations5_moe_sparse}.
Though this form is sufficient to mathematically express an architecture, it requires algebraic manipulation into Figure~\ref{fig:keyoperations2_moe} to be evaluated.
This algebraic manipulation can be supplied by the tools of Section~\ref{sec:hypergraph}.

\DiagramMacroMedium{3StrideBroadcasted}{keyoperations5_moe_sparse}{
The expression from Figure~\ref{fig:keyoperations2_moe} can be reexpressed in a mathematically equivalent form which rearranges indexed-inputs to outputs, over which we perform a sparse sum.
}

\FloatBarrier

\subsection{Convolution}

Under our framework, convolution can be defined by a reindexing followed by a learned linear layer.
Convolution can be pythonically expressed using slices as $\displaystyle \mathbf{y}[ i_{\overline{x} '} ,:] =\Sigma _{j_{k} \in k}\mathbf{L}[ j_{k}] \cdot \mathbf{x}[ i_{\overline{x} '} +j_{k} ,:]$. This can be split into a convolution operation which first reindexes $\displaystyle ( *\mathbf{x})[ i_{\overline{x} '} ,j_{k} ,\ell _{c}] =\mathbf{x}[ i_{\overline{x} '} +j_{k} ,\ell _{c}]$ followed by a $\displaystyle kc$ to $\displaystyle c'$ linear layer $\displaystyle \mathbf{L}$. The convolution operator can be represented by a reindexing using addition, $\displaystyle +:\overline{x} 'k\rightarrow \overline{x}$. Because the convolution matrix can be supplied by a reindexing, it is computationally distinct from the general class of all matrices, and this distinction is clearly shown by our framework. Convolution expressed in our framework is shown in Figure \ref{fig:addops10_convolution}.

\DiagramMacroR{3StrideBroadcasted}{addops10_convolution}{
Convolution can be expressed as a two-step process of an addition reindexing (\textit{convolution matrix}) followed by a learned linear layer.
}

A noteworthy benefit of our framework is the ability to reason about models. In the case of convolution, we are able to observe the translational equivariance by propagating indexes through the $\displaystyle \overline{x}'$ axis. We can define translation as a reindexing which shifts an element $\displaystyle i$ to $\displaystyle i+t$. We can ``slide'' this translation through the expression using the broadcasting rules and the fact that $\displaystyle ( +)( i+t,k) =( +)( i,k) +t$. This sliding is derived from the Yoneda sliding, elaborated in Appendix~\ref{sec:yoneda-new}. This is shown in Figure \ref{fig:addops11_convolution_equivariance}, revealing the translational equivariance of convolution across the $\displaystyle \overline{x}$/$\displaystyle \overline{x} '$ axis.

\DiagramMacroR{3StrideBroadcasted}{addops11_convolution_equivariance}{
The equivariance of convolution can be shown diagrammatically by sliding an index-wise translation over the operation.
}

\FloatBarrier

\section{Results}\label{sec:results}
Taken together, the tools presented above allows us to represent the entirety of the DeepSeek-V3 architecture~\citep{deepseek-ai_deepseek-v3_2025}.
We can define it using the code outlined so far.
This code expression grants us access to a ``web'' of automated features that allows us to manage this expression systematically.
First, we will provide an outline of how a complex model such as DeepSeek-V3 is represented part-by-part.
Then, we will cover how systematic code features -- interoperable Python/TypeScript implementations (Section~\ref{sec:interop}), autoalignment (Section~\ref{sec:autoalignment}), configuration generation (Section~\ref{sec:configuration}), hypergraph analysis (Section~\ref{sec:hypergraph}), and PyTorch compilation (Section~\ref{sec:PyTorch}) -- work together to provide the utility needed to manage deep learning architectures as algebraic structures. This lays the foundation for future features, and an integrated web of tools for analyzing and managing deep learning models from a mathematical perspective (Section~\ref{sec:conclusion}).

\subsection{DeepSeek-V3 Representation}\label{sec:deepseek}
We begin with a representation of DeepSeek-V3, using the tools of Section~\ref{sec:keyoperations}.
The code used to generate these representations is shown in Appendix~\ref{sec:jupyter}.
To keep the representation manageable, we separate it into a series of blocks which indicate key regions.
These blocks provides operators which can be decomposed at increasing levels of detail.
At the highest level, we have a standard transformer-like layout: attention layers interspersed with feed forward layers, repeated some number of times. In this case, 27. Attention is provided by multi-head latent attention, where Einstein operations are used to define multi-head $Q-K$ matrix multiplication across a complex rotary and latent real component.
The ability to represent various datatypes here is used to show complex variables with their distinct form of multiplication.
The feed forward layers are supplied by GeGLU blocks~\citep{shazeer_2020}, which use multiple linear layers in sequence.
Finally, the Mixture-of-Experts layer uses the format outlined in Section~\ref{sec:moe}.

\DiagramMacroLarge{4DeepSeek}{deepseek0_main}{
At a high-level, the deep seek algorithm follows a standard transformer structure: an embedding maps from a series of tokens indexed across $V \in \mathbb{N}$ options, and is sent through a sequence of attention-feed forward layers. The final result is mapped back through a linear layer, producing a vector $\lift{\mathbb{R}}{V}$ indicating the log-weights of output token options. These layers are all residual connections~\citep{he2015deep}.
}

\DiagramMacroLarge{4DeepSeek}{deepseek1_mhla}{
Attention blocks correspond to multi-head latent attention which comprises a complex rotary component and a real, latent component. These are used to generate queries and keys, which map through a multi-head $Q-K$ matrix multiplication contracted over the $d$ axes parallelized over $h$ heads. The results of the complex and real components are summed together, passed through a SoftMax and weighted lower triangular normalizer for causality, undergo an $S-V$ matrix muliplication, and are passed through an output linear layer, contracting the heads.
}

\DiagramMacroLarge{4DeepSeek}{deepseek3_rotarylinear}{
The rotary linear layer generates queries and keys for the complex component. This is supplied by a rotary mask generator, which is \textit{not} ``parallelized'' over the $\overline{x}$ axis -- the outputs are strictly dependent on which $\overline{x}$ index we are at. This ``binds'' the $\overline{x}$ axis into the output, meaning the result has positions encoded. This multiplies with a linear layer which generates both a real and complex value for the $m$ inputs. The resulting output is ``decomplexed'', and yields an array with integrated positional information.
}

\DiagramMacroLarge{4DeepSeek}{deepseek4_mlp}{
The multilayer perceptrons used by DeepSeek-V3 is a GeGLU~\citep{shazeer_2020}. We have two input linear layers, one of which passes through an elementwise operation, the results multiplied together, and followed by an output linear layer. The internal size, in this case $d * f$, is not dependent on external axes sizes, and therefore can be configured independently.
}

\DiagramMacroLarge{4DeepSeek}{deepseek5_moe}{
The Mixture-of-Experts block follows Section~\ref{sec:moe}, with a gate selecting experts added to a multilayer perceptron. This architecture supports multi-GPU computational splits. Here, we show the simplified ``sparse axis'' presentation which presents a mathematically coherent representation insufficient for direct computation.
}

\DiagramMacroLarge{4DeepSeek}{deepseek6_gate}{
The gate for the Mixture-of-Experts involves a gate linear layer which provides weights for each of $n$ layers, a Top-K selection which generates a sparse axis, which is then normalized by a SoftMax-like operation, which, in practice, uses a sigmoid rather than an exponent to weight values.
}

\DiagramMacroLarge{4DeepSeek}{deepseek7_experts}{
Finally, the experts component follows Section~\ref{sec:moe}, with each layer having an additional $n$ axis of which $k$ values are non-zero.
}

\subsection{Interoperable Code Package} \label{sec:interop}
The \textit{DeepSeek-V3} model presented above is generated in Python\ifanon\else{} via the \href{https://github.com/mit-zardini-lab/pyncd}{\texttt{pyncd} package}~\citep{pyncd}\fi, following an implementation which matches the term construction framework and the pseudocode presented alongside mathematical definitions.
This results in functional expressions corresponding to the real terms of the algebra with UID terms being open to configuration and algebraic manipulation.
These implementations are mirrored in TypeScript\ifanon\else{} via the \href{https://github.com/mit-zardini-lab/tsncd}{\texttt{tsncd} package}~\citep{tsncd}\fi, where they are used to generate diagrams.
Expressions can be encoded in JSON and sent between the packages, meaning we benefit from Python's familiarity and algebraic tools along with TypeScript's powerful rendering engine.

\subsection{Autoalignment} \label{sec:autoalignment}
A key difficulty with managing deep learning models is ensuring the size of axes properly match.
This is important both for broadcasting to be clearly defined and for necessarily static axis sizes -- such as those of linear layers -- to be properly sized during implementation.
In the Python implementation, we can use loose algebraic terms with pending UID axes which are then aligned during forced composition, indicated by $@$.
This alignment can be seen as a \textit{functor}, a composition preserving map between categorical expressions.
Given $f:X_f \to Y_f$ and $g: Y_g \to Z_g$, we try to find functors $F_f$ and $F_g$ which maps the intermediate objects to a unified form, $F_f:Y_f \mapsto Y$ and $F_g: Y_g \mapsto Y$.
Then, we work with $f @ g = F_f(f)\compose F_g (g)$.
This allows expressions to be succinctly defined with properly aligned and recorded shapes.
Autoalignment is not always possible, and tries to do the following process: if the number of axes in the top segment mismatch, batch lift an expression, then if the number of segments mismatch, take the product with an identity.
This leads to two expressions with the same number of axes.
Corresponding axes are then put in UID buckets with a canonical form and are applied over both expressions, ensuring alignment.
This is shown in the table below and Figure~\ref{fig:algebraic1_alignment}, where templates for base expressions are composed to define attention.

\begin{table}[!h]
    \centering
    \begin{tabular}{|p{0.5\textwidth}|p{0.45\textwidth}|}
        \hline
            \begin{minipage}[t]{0.2\textwidth}
            \tablefig{4Results/algebraic0_0einops}
            \end{minipage}
            \begin{minipage}[t]{0.3\textwidth}
            \begin{lstlisting}[language=Python]
qk_matmul = ops.Einops
  .template(
    'q h d, x h d -> h q d')
            \end{lstlisting}
        \end{minipage}
            &
            \tablefig{4Results/algebraic0_1softmax}
            \begin{lstlisting}[language=Python]
softmax = ops.SoftMax.template()
            \end{lstlisting}
            \textit{As we did not provide the axes, an unnamed axis is generated. Autoalignment will then rename it.}
            \\
        \hline
            \begin{minipage}[t]{0.2\textwidth}
            \tablefig{4Results/algebraic0_3einops}
            \end{minipage}
            \begin{minipage}[t]{0.3\textwidth}
            \begin{lstlisting}[language=Python]
sv_matmul = ops.Einops
  .template(
    'h q x, x h d -> q h d')
            \end{lstlisting}
            \end{minipage}
            &
            \tablefig{4Results/algebraic0_2mask}
            \begin{lstlisting}[language=Python]
mask = ops.
  WeightedTriangularLower
  .template()
            \end{lstlisting}
        \\
        \hline
    \end{tabular}
\end{table}

\FloatBarrier

\DiagramMacroC{4Results}{algebraic1_alignment}{
 Applying \texttt{qk\_matmul @ softmax @ mask @ sv\_matmul}, we perform autoalignment operations at each step. Axes are aligned to be the same. Operations are batch broadcasted when the number of axes mismatch. In the case of \texttt{sv\_matmul}, we take the product with an identity rearrangement with the array $\displaystyle [\mathbb{R} ,xhd]$. Note that the $\displaystyle h$, $\displaystyle q$, $\displaystyle x$ axes of \texttt{qk\_matmul} and \texttt{sv\_matmul} are separately generated, meaning their equivalence is only realized after composition. The $\displaystyle d_{1}$ and $\displaystyle d_{2}$ axes are never aligned together, meaning that the final configuration of the expression takes $\displaystyle \langle q,h,x,d_{1} ,d_{2}\rangle $.
}

\subsection{Configuration Generation} \label{sec:configuration}
Constructed terms -- such as the one expressed in Figure~\ref{fig:algebraic1_alignment} -- have free terms associated with UIDs.
Above, we showed how axes can be automatically aligned during composition.
The remaining UIDs reveals the overall degrees of freedom for configuration, and an expression can be scanned to find these terms.
From the list of UIDs, we generate an assignment to set axes to desired sizes. In Figure \ref{fig:algebraic2_configuration}, we provide the example of setting the axis sizes of multi-head attention placed inside a ResNet \cite{he2015deep}, making the expression ready for constructing a PyTorch model.

\DiagramMacroC{4Results}{algebraic2_configuration}{
    A full ResNet attention block expression constructed through autoalignment has a number of numeric sizes with UIDs. By assigning values to these, we have a configured expression ready for PyTorch compilation.
}

\FloatBarrier

\subsection{Compositional compilation to PyTorch}\label{sec:PyTorch}
Algebraically defined models indicate execution instructions from start to end and, therefore, we have sufficient information to supply a corresponding PyTorch module.
This can be done in a compositional manner by converting \texttt{Composed} constructs to sequential PyTorch modules, and \texttt{ProductOfMorphisms} to parallel PyTorch modules.
Broadcasted operations supply execution metadata through the operator tag and broadcasting information in their weaves and reindexings.
Due to PyTorch's elaborate broadcasting semantics, this conversion requires some infrastructure.
The end result of this process allows us to take a configured constructed term, as above, and generate a runnable PyTorch module.
Compilation to PyTorch is an incidental feature of the constructed terms. No aspect of the algebraic framework is explicitly defined for PyTorch. Compilation to any other framework -- TensorFlow, Triton, or others -- would all be done in a similar manner. Algebraic terms serve as a universal framework for compilation.
An example of such a compilation is shown in Appendix~\ref{sec:jupyter}, where we compile a description of a multi-head transformer.

\subsection{Algebraic Manipulation with Hypergraphs}\label{sec:hypergraph}
A key feature of algebraic terms is algebraic manipulation, the ability to define general algebraic rules and apply these to models. The composed-product approach we outline above is tailored to constructing and diagramming terms. However, bifunctoriality and symmetry (see Section \ref{sec:category}) yield redundancy. Certain algebraic properties are better captured by hypergraphs~\citep{piedeleu2025string} which discard immaterial information about morphisms' location among composed, product, and rearrangement structures. We can define an algorithm to convert from a \texttt{ProdCategory[L,M]} to a \texttt{Hypergraph[L,M]}, defining the conversion from mathematical first-principles and therefore having it applicable to any symmetric monoidal category, including $\textbf{Para}\lift{\mathcal{B}}{\mathcal{A}}$. A hypergraph form of Figure \ref{fig:algebraic2_configuration} is shown in Figure \ref{fig:algebraic3_hypergraph}.

\DiagramMacroC{4Results}{algebraic3_hypergraph}{
    By converting from a ``ProdCategory[L,M]`` to a ``Hypergraph[L,M]``, we can perform algebraic manipulation with hypergraph rewrite rules. This allows us to apply general algebraic properties such as associativity, bifunctoriality, and symmetry, which are not easily captured by the composed-product approach.
}
\FloatBarrier

\subsection{Diagram Generation}
Within TypeScript, we can implement the diagramming procedure. We convert \texttt{Composed} constructs into horizontally placed blocks, and \texttt{ProductOfMorphisms} constructs into vertically placed blocks.
Objects are associated with a series of anchors which are sequentially linked together.
The layered approach to defining our category allows us to begin with a framework for rendering categories in general, which is then specified for representing the category of deep learning architectures.
In this manner, category theory's templating approach to mathematical structure allows code to be reused in different contexts.
This diagramming tool has been used to generate diagrams of Mixture-of-Experts models (see Section~\ref{sec:moe}) and \textit{DeepSeek-V3} (see Section~\ref{sec:deepseek}) and is shown in Appendix~\ref{sec:jupyter}.

\FloatBarrier
\section{Future Work and Conclusion}\label{sec:conclusion}

\begin{figure}[h!]
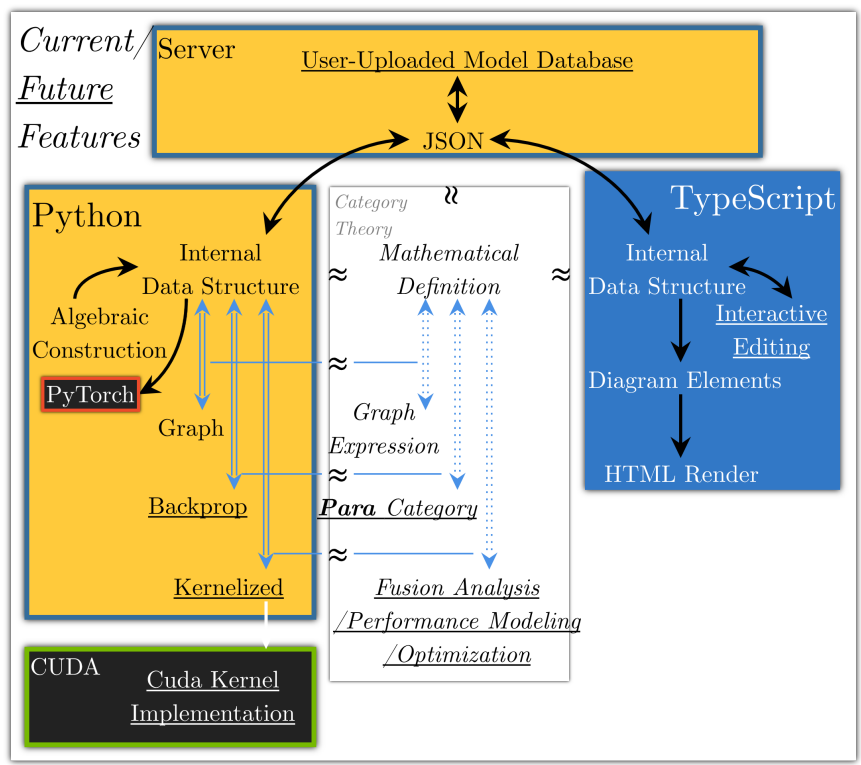

    \centering
    \ifanon
    \tablefig{4Results/algebraic5_web}
    \else
    \tablefig{4Results/algebraic5_web_deanon}
    \fi
    \caption{
    The modularity of the framework allows for a web of features to be developed and integrated. At any point, new features can be ``hooked'' into the system, allowing for a powerful and extensible framework for future AI research.
    }
    \label{fig:algebraic5_web}
\end{figure}

The formal mathematical framework established above and implementation following the construction rules paradigm allows us to process deep learning models algebraically. The dependency relations of implementations only have to follow the dependency relations of underlying mathematical definitions, allowing for modular code. This allows new features to be readily added.

The most impactful novel feature would be automatic low-level kernel derivation. This follows from the techniques outlined in \textit{\textbf{F}lash\textbf{A}ttention on a \textbf{N}apkin (FAN)}~\citep{abbott2025napkin}. That work would go beyond the scope of this paper, which focuses on the foundations of a formal approach. FAN is uniquely positioned to derive critical low-level optimizations which are inaccessible to standard compilations found in PyTorch. Tiled matrix multiplication and attention can be derived from first-principles. Furthermore, FAN allows for hardware-aware performance models, which guide improved future design. By integrating model analysis into mathematics, PyTorch, and pollable tools, AI-accelerated development of future algorithms will be within reach.

More broadly, various analyses dependent on compositionality will be possible within the categorical framework. The $\displaystyle \mathbf{Para}$ approach allows for backpropagated algorithms to be derived algebraically and piecewise. This offers an alternative to PyTorch's backpropagation tools, circumventing the need for its infrastructure almost entirely allowing for a true universal model of deep learning architectures which can be directly optimized and compiled into low-level code. Additionally, we can develop novel compositional analyses. A subject of particular interest is quantization error composition. Computational costs are linearly dependent on quantization size, while bandwidth costs -- often the more pressing concern for models -- are superlinearly dependent. Determining where to utilize lower quantizations is a byproduct of how effectively random rounding errors will propagate. This is a compositional property, and therefore category theory's tools are of particular interest.

The diagramming tool can be developed into a suite of tools which allow for models to be developed and reasoned about diagrammatically, then shared to other uses through the natural JSON encoding resulting from the term construction system. Such a toolset will allow models to be intuitively manipulated and shared, enhancing research productivity.

This collection of modularity, mathematics, and tools leads to a ``web'' of features which can be extended into the future. At any point we can ``hook'' into the system (see Figure \ref{fig:algebraic5_web}), creating a powerful framework for future AI research.

Finally, we can integrate the categorical and optimization tools of this framework into categorical codesign~\cite{zardini2023codesign} to optimize various stages of the artificial intelligence stack together. Models generate performance models, indicating the available memory-bandwidth-compute requirements for specified levels of performance, and this requirements-functionality relationship can be fit into the requirements-functionality relationships of hardware, power supply, and other components. This allows for a holistic approach to AI development, where the design of models, algorithms, and hardware are co-optimized.

Overall, the mathematical procedures, formal descriptions, algebraic implementation, and automated diagramming provided in this work lay the foundation for a robust, formalized, and systematic approach to the design of deep learning algorithms. This addresses a key challenge in the deep learning community, and opens the possibility of using artificial intelligence to design itself.

\FloatBarrier

\bibliographystyle{tmlr}
\bibliography{references}

\appendix
\section{Appendix}

\subsection{Lifting Compositionality}\label{sec:yoneda-new}
\begin{proof}
For batch lifting, we use the join / split isomorphism pair along with bifunctoriality to compose products;
\begin{align*}
\lift{f}{P} \compose \lift{g}{P} &=
\Sp{X}{P}\compose \prod_{p \in \cardEl{P}} f \compose \Jn{Y}{P} \compose
\Sp{X}{P}\compose \prod_{p \in \cardEl{P}} g \compose \Jn{Y}{P} \\
 &=
\Sp{X}{P}\compose \prod_{p \in \cardEl{P}} f \compose \prod_{p \in \cardEl{P}} g \compose \Jn{Y}{P} \\
&=
\Sp{X}{P}\compose \prod_{p \in \cardEl{P}} (f \compose g) \compose \Jn{Y}{P} \\
&=\lift{f \compose g}{P}
\end{align*}

For reindexing, we use the compositional property of rearrangements.
\begin{align*}
\lift{X}{\eta} \compose \lift{X}{\rho} &=
\Sp{X}{Q} \compose \rng{\eta}{(X)_{q \in \cardEl{Q}}} \compose \Jn{X}{P}
\compose
\Sp{X}{P} \compose \rng{\rho}{(X)_{q \in \cardEl{P}}} \compose \Jn{X}{R} \\
&=
\Sp{X}{Q} \compose \rng{\eta}{(X)_{q \in \cardEl{Q}}} \compose \rng{\rho}{(X)_{q \in \cardEl{P}}} \compose \Jn{X}{R} \\
&=
\Sp{X}{Q} \compose \rng{\rho \compose \eta}{(X)_{q \in \cardEl{Q}}} \compose \Jn{X}{R} \\
&=
\lift{X}{\rho \compose \eta}
\end{align*}

Finally, for the combined case we require that the condition below holds;
\begin{align*}
\left(\prod_{q \in \cardEl{Q}} f\right)
\compose \rng{\eta}{(Y)_{q \in \cardEl{Q}}} &=
\rng{\eta}{(X)_{q \in \cardEl{Q}}}
\compose \left(\prod_{p \in \cardEl{P}} f\right)
\end{align*}

This can be ensured by either having $\eta$ be natural within $\mathcal{C}$, or $f$ being \textit{deterministic}, treating all rearrangements as natural.

\begin{align*}
\lift{f}{Q} \compose \lift{Y}{\eta} &=
\Sp{X}{Q}
\compose \left(\prod_{q \in \cardEl{Q}} f\right)
\compose \Jn{Y}{Q} 
\compose \Sp{Y}{Q} 
\compose \rng{\eta}{(Y)_{q \in \cardEl{Q}}} \compose \Jn{Y}{P} \\
& =
\Sp{X}{Q}
\compose \left(\prod_{q \in \cardEl{Q}} f\right)
\compose \rng{\eta}{(Y)_{q \in \cardEl{Q}}}
\compose \Jn{Y}{P} \\
& =
\Sp{X}{Q} 
\compose \rng{\eta}{(X)_{q \in \cardEl{Q}}}
\compose \left(\prod_{p \in \cardEl{P}} f\right)
\compose \Jn{Y}{P} \\
& =
\Sp{X}{Q} 
\compose \rng{\eta}{(X)_{q \in \cardEl{Q}}}
\compose \Jn{X}{P}
\compose \Sp{X}{P}
\compose \left( \prod_{p \in \cardEl{P}} f \right)
\compose \Jn{Y}{P} \\
& =
\lift{X}{\eta}
\compose \lift{f}{Q}
\end{align*}
\end{proof}

\setlength{\textfloatsep}{6pt plus 2pt minus 2pt}
\setlength{\intextsep}{6pt plus 2pt minus 2pt}
\setlength{\floatsep}{6pt plus 2pt minus 2pt}
\section{Results Notebook}\label{sec:jupyter}
\begin{figure}[!ht]
    \centering
    \begin{minipage}{0.45\textwidth}
        \centering
        \includegraphics[width=\textwidth]{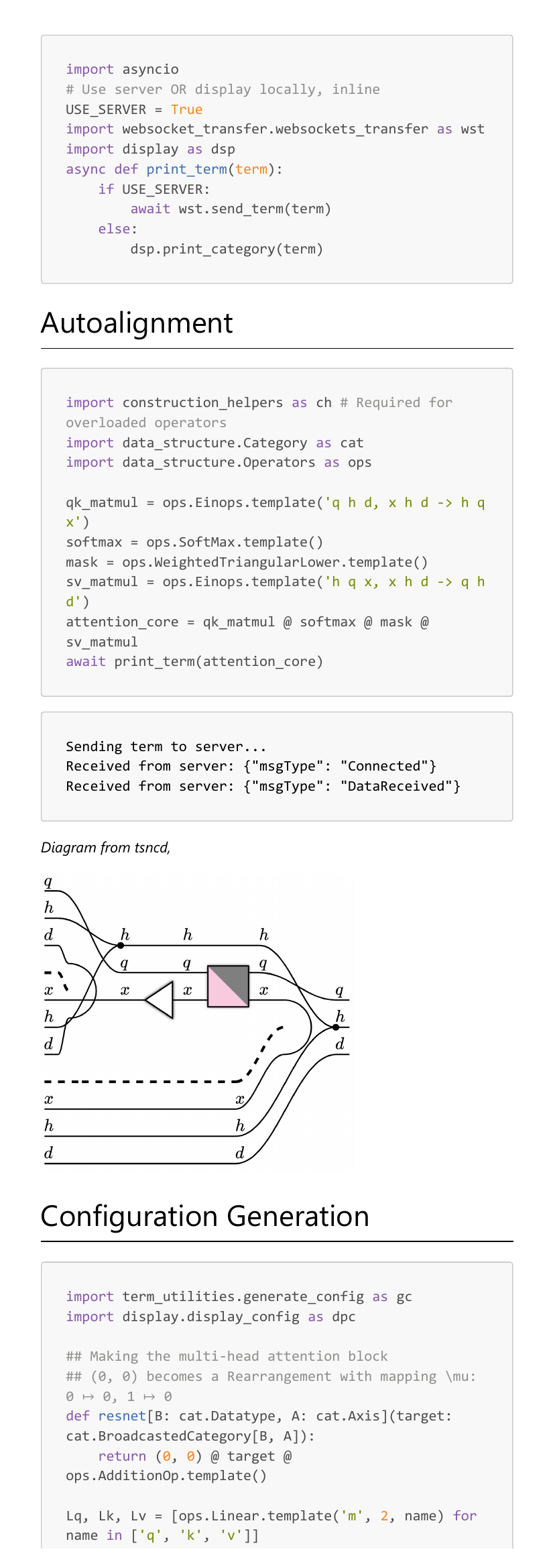} %
    \end{minipage}%
    \begin{minipage}{0.45\textwidth}
        \centering
        \includegraphics[width=\textwidth]{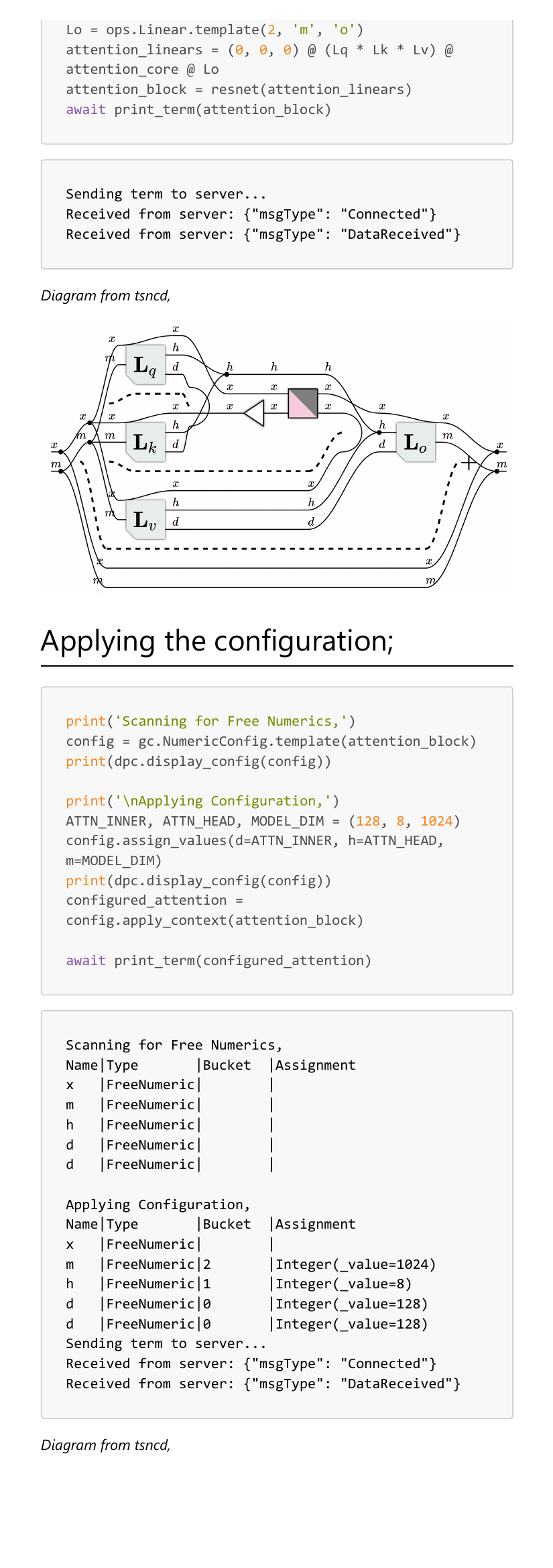} %
    \end{minipage}
\end{figure}
\begin{figure}[!ht]
    \centering
    \begin{minipage}{0.45\textwidth}
        \centering
        \includegraphics[width=\textwidth]{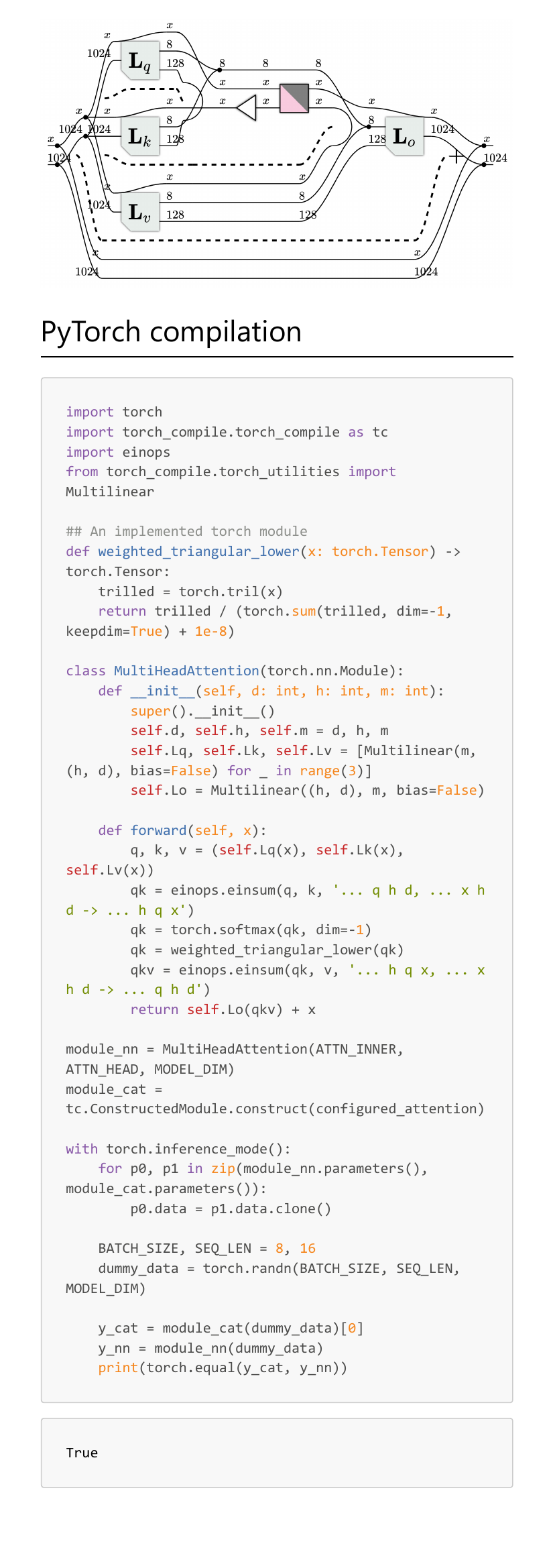} %
    \end{minipage}%
    \begin{minipage}{0.45\textwidth}
        \centering
        \includegraphics[width=\textwidth]{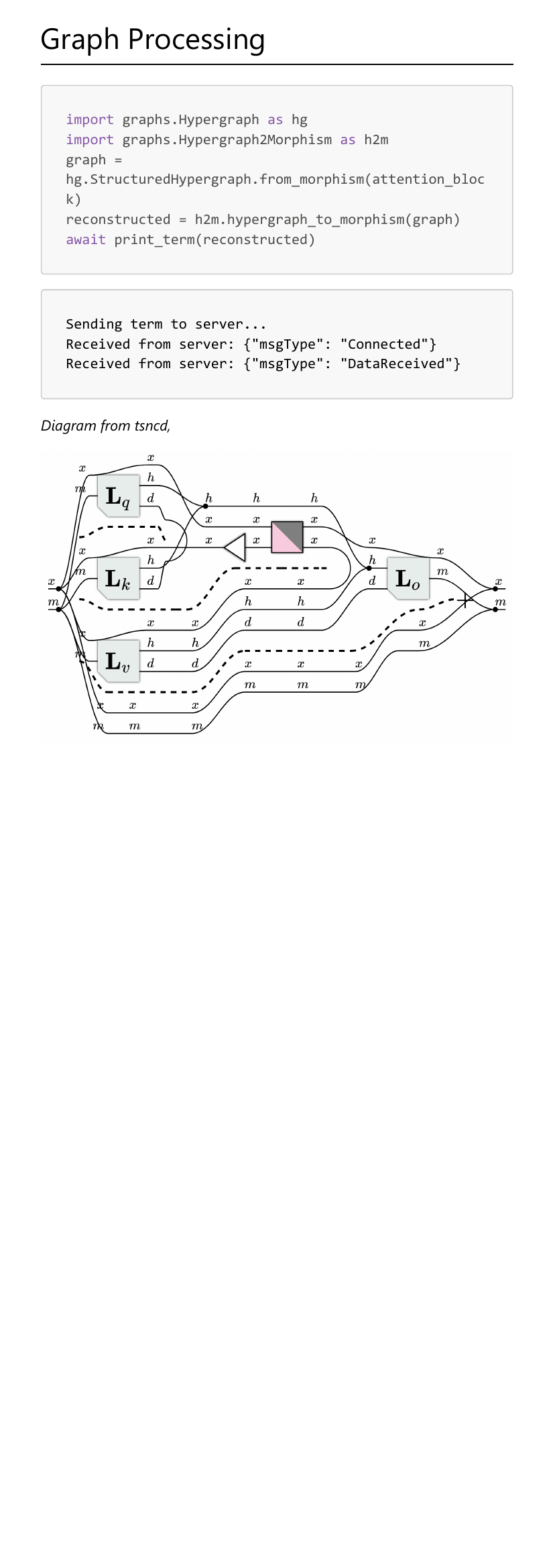} %
    \end{minipage}
\end{figure}
\begin{figure}[!ht]
    \centering
    \begin{minipage}{0.45\textwidth}
        \centering
        \includegraphics[width=\textwidth]{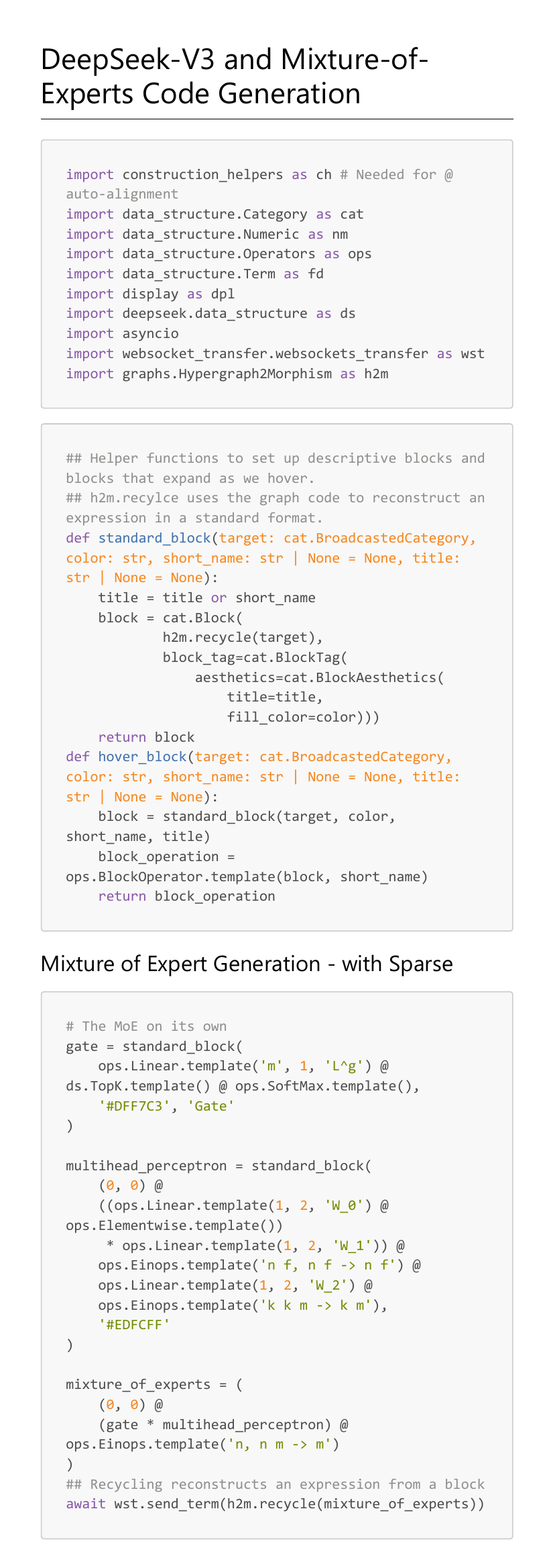} %
    \end{minipage}%
    \begin{minipage}{0.45\textwidth}
        \centering
        \includegraphics[width=\textwidth]{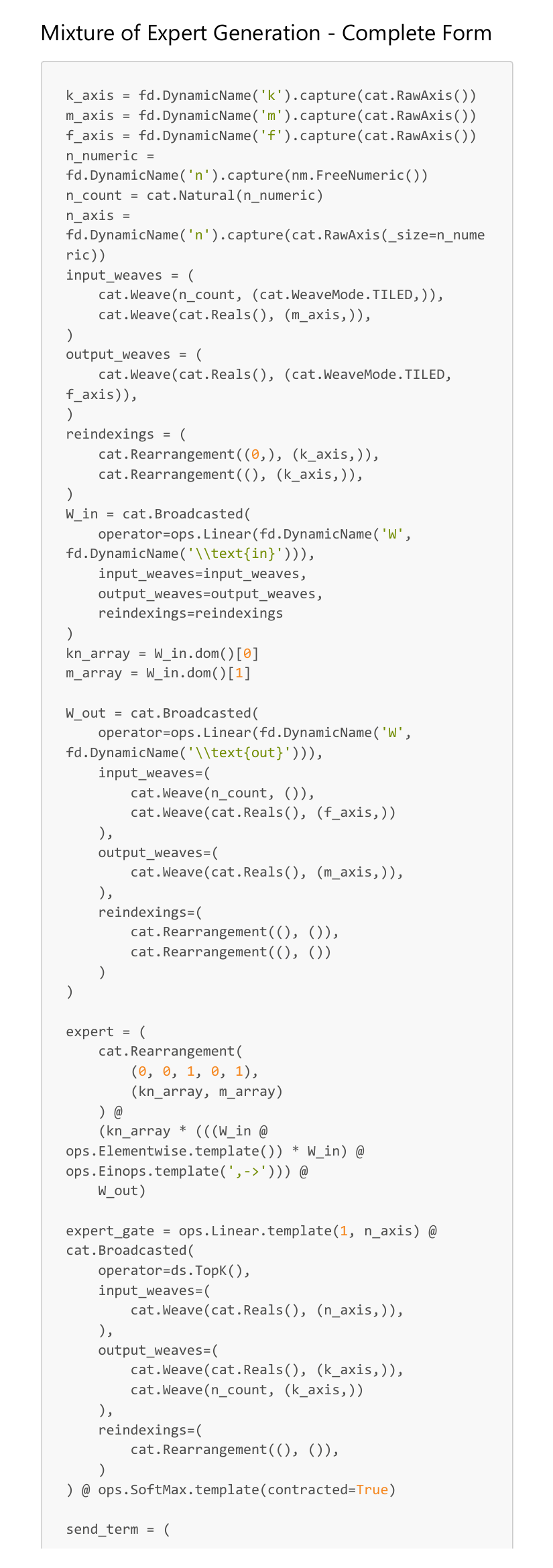} %
    \end{minipage}
\end{figure}
\begin{figure}[!ht]
    \centering
    \begin{minipage}{0.45\textwidth}
        \centering
        \includegraphics[width=\textwidth]{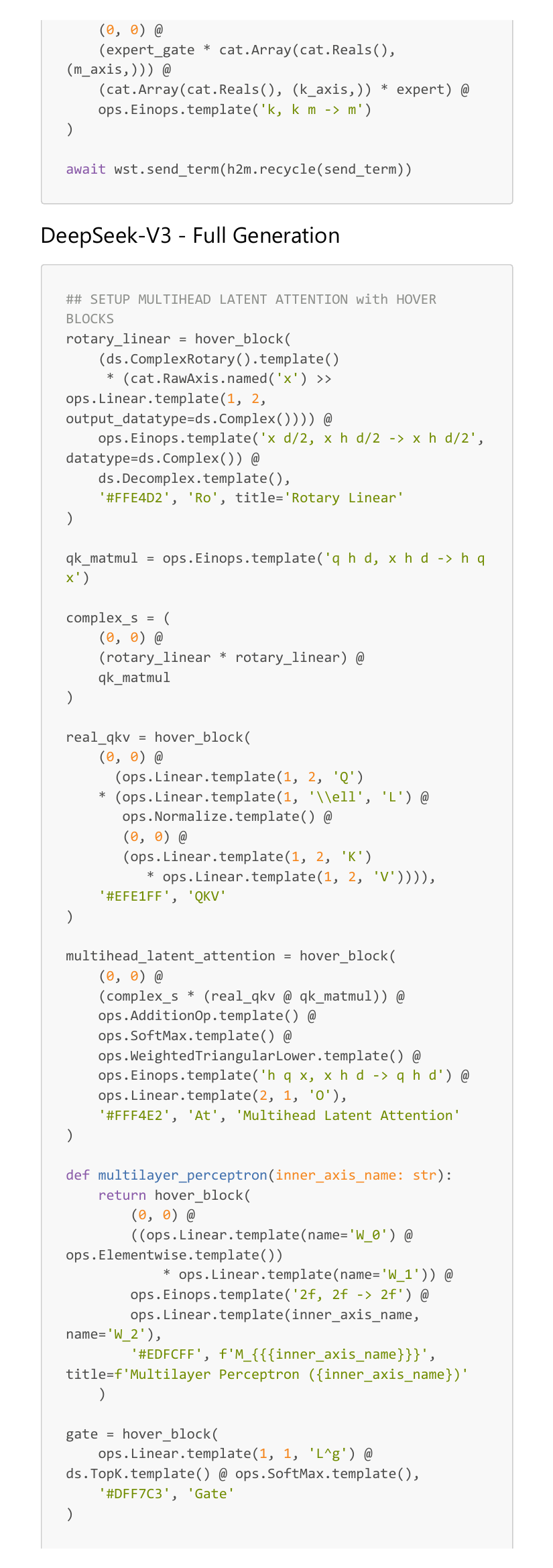} %
    \end{minipage}%
    \begin{minipage}{0.45\textwidth}
        \centering
        \includegraphics[width=\textwidth]{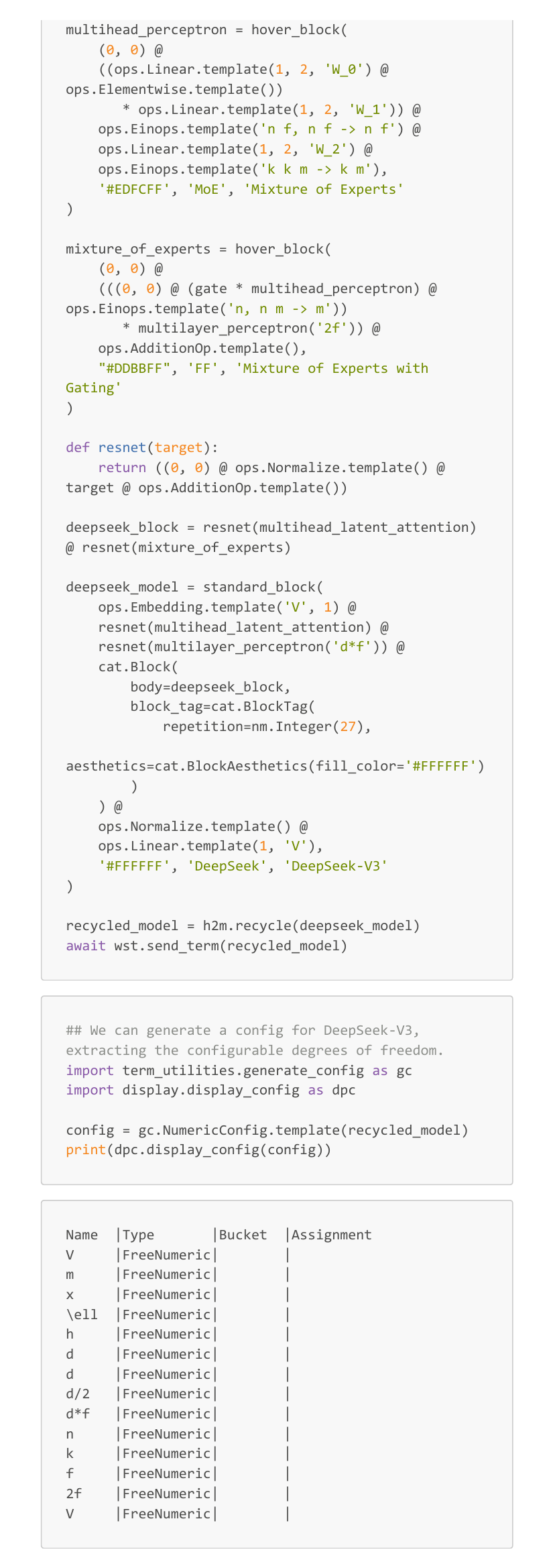} %
    \end{minipage}
\end{figure}

\end{document}